\documentclass[twoside]{article}
\usepackage[preprint]{aistats2027}
\usepackage[T1]{fontenc}
\usepackage[utf8]{inputenc}
\usepackage{amsmath,amssymb,amsthm,mathtools,booktabs,longtable,graphicx}
\usepackage[round]{natbib}
\usepackage[hidelinks]{hyperref}
\usepackage{balance}

\newtheorem{theorem}{Theorem}
\newtheorem{proposition}{Proposition}
\theoremstyle{definition}
\newtheorem{assumption}{Assumption}
\newcommand{\E}{\mathbb E}
\newcommand{\R}{\mathbb R}
\newcommand{\norm}[1]{\left\lVert#1\right\rVert}
\newcommand{\eps}{\varepsilon}
\begin{document}
\runningauthor{A. Fern\'andez-Hern\'andez, C. P\'erez-Corral, J. I. Mestre, M. F. Dolz, E. S. Quintana-Ort\'i}
\twocolumn[
\aistatstitle{Early Memory Selection for Balanced Adam}
\aistatsauthor{
Alberto Fern\'andez-Hern\'andez
\And
Cristian P\'erez-Corral
\And
Jose I. Mestre
}
\aistatsaddress{
Universitat Polit\`ecnica de Val\`encia\\
Valencia, Spain\\
\texttt{a.fernandez@upv.es}
\And
Universitat Polit\`ecnica de Val\`encia\\
Valencia, Spain\\
\texttt{cpercor@upv.es}
\And
Universitat Polit\`ecnica de Val\`encia\\
Valencia, Spain\\
\texttt{jimesmir@upv.es}
}
\aistatsauthor{Manuel F. Dolz\And Enrique S. Quintana-Ort\'i}
\aistatsaddress{Universitat Jaume I\\ Castell\'o de la Plana, Spain\\ \texttt{dolzm@uji.es} \And Universitat Polit\`ecnica de Val\`encia\\ Valencia, Spain\\ \texttt{quintana@disca.upv.es}}
]
\begin{abstract}
We propose a method for choosing the shared memory parameter $\beta_1=\beta_2=\beta$ in Adam from a short pilot training. The selected $\beta$ remains fixed during the subsequent full training. A local model of Adam's normalized direction balances sampling variability against the delay introduced by averaging past gradients. This balance gives a cubic memory rule, whose two coefficients are estimated from gradient probes at a few pilot checkpoints. The estimator uses the numerator and denominator jointly, preserving their covariance. With a 200-update pilot and sixteen probe gradients at each of four checkpoints, a seed-matched retrospective evaluation on eleven vision and language workloads reduces mean relative validation gap by 40.7\% and worst-quarter mean gap by 44.3\% against the grid representative of shared $\beta$ 0.95. The mean gap is also 32.3\% lower than that of the best constant $\beta$ chosen across all eleven workloads.
\end{abstract}

\section{Introduction}
Adam's memory is a statistical timescale: it determines which past gradients still influence the current update \citep{kingma2015adam}. A short memory responds quickly to changes and retains more sampling variability. A long memory averages that variability more effectively and follows a changing gradient with a delay. The useful timescale therefore depends on the dynamics of the task. A single fixed default makes a compromise across those dynamics; early measurement offers a way to choose the timescale for the workload at hand.

Recent language-model experiments show that setting the two parameters equal can retain strong training performance \citep{orvieto2025secretsauce}. This family is called \emph{balanced Adam}: $\beta_1=\beta_2=\beta$. It reduces tuning to one memory while preserving coordinatewise normalization. A shared value around 0.95 is a useful empirical reference. The remaining choice still matters: one memory per training is different from one memory for every training. This raises a concrete question: \emph{can a short observation of gradient dynamics choose a useful fixed memory for each workload?}

Adam normalizes an averaged gradient by an averaged squared gradient, so the errors of its two moments act jointly. We retain their covariance and analyze how averaging reduces fluctuations while delaying the response to a changing direction. A first-order local model separates these two effects and gives a memory rule in units $H=(1-\beta)^{-1}$. Grouped gradient probes and checkpoint differences estimate its coefficients. Completed fixed-memory trainings evaluate the resulting choices.

The contributions are:
\begin{itemize}
\item A joint-moment tracking objective that separates retained noise from delay and yields a cubic memory law. Its sensitivity formula explains why useful calibration tolerates coefficient error.
\item An explicit early finite-probe rule and a finite-resolution lag interpretation, with a quantified population-model allowance for endpoint uncertainty.
\item An eleven-workload evaluation against every fixed grid memory, with matched training recipes, task-level calibration, dense timing sensitivity, and gradient-equivalent cost accounting.
\end{itemize}
Different fixed $\beta$ values serve different tasks well. Our selector uses gradients to choose a memory for each run. On the evaluated suite it reduces mean, maximum, and tail gap compared with every fixed grid value, including the best constant selected from all validation outcomes.
Code and reproducible evaluation material are available at \url{https://github.com/AlbertoFdezHdez/Adam_beta_rule_cubic}.

\section{Related work}
We distinguish analyses of Adam's moments, theoretical relationships between its $\beta$ values, and methods that learn optimizer parameters during training. Our question is the early choice of one fixed shared memory.
\paragraph{Adam's statistical structure and shared memories.}
\citet{balles2018dissecting} analyze the interaction between sign, magnitude, and variance in Adam. \citet{orvieto2025secretsauce} provide extensive empirical support for equal memory parameters and a statistical perspective on Adam. These results motivate the shared-memory family. A complementary budget-based rule relates its memory to a useful training horizon \citep{fernandez2026refresh}. The present rule obtains its selection signal from early normalized-direction variability and temporal change.

\paragraph{\texorpdfstring{$\beta$}{β} relationships and optimization guarantees.}
\citet{ahn2024ftrl} interpret Adam through online learning of updates and follow-the-regularized-leader. \citet{nguyen2026beta} analyze both sides of $\beta_1=\sqrt{\beta_2}$, whose optimality depends on the adversary model. Their criterion relates the memories under online regret; ours chooses a shared memory from an observed gradient process under local tracking risk. Stochastic-differential-equation analyses explain scaling with batch size \citep{malladi2022sdes}, whereas we hold the training recipe fixed and measure its early dynamics.

\paragraph{Learning optimizer parameters.}
\citet{chandra2022gradient} differentiate through optimizers to learn their hyperparameters, including both Adam memories. MADA jointly learns memory and optimizer-interpolation coefficients through online hypergradient descent; its fixed-state variant reuses the optimizer learned at the end of another training \citep{ozkara2024mada}. Our rule instead uses an early noise--lag measurement to choose one shared $\beta$, fixed for full training.

\paragraph{Adaptive momentum and moment estimation.}
\citet{topollai2026adaptive} derive online momentum adaptation from a two-plane objective model. Their AdamW variant adapts the first-moment memory while keeping the second-moment memory fixed, with layerwise adaptation in language models. YellowFin estimates curvature and gradient statistics to tune SGD's momentum and learning rate \citep{zhang2019yellowfin}. Kalman-Adam estimates first and second moments using separate Bayesian filters \citep{ali2026kalman}. The present method retains Adam's exponential averages and calibrates their common timescale early, using fluctuations and change of their joint normalized target. Online memory adaptation and early fixed shared-memory selection provide complementary ways to use information from the training problem.

\section{What should the memory track?}\label{sec:target}
Choosing a memory starts with identifying what Adam's numerator and denominator jointly estimate. This statistical reference gives a directly analyzable objective: the tracking error of the normalized update direction.
\subsection{Adam and its population reference}
Let $\theta_t\in\R^d$ be the parameters and $g_t\in\R^d$ an effective-batch gradient, including accumulation and the recipe's gradient processing. From zero initial moments Adam computes
\begin{align}
 m_t&=\beta_1m_{t-1}+(1-\beta_1)g_t,\nonumber\\
 v_t&=\beta_2v_{t-1}+(1-\beta_2)g_t^{\odot2},\label{eq:adam}\\
 u_t&=\frac{m_t/(1-\beta_1^t)}{\sqrt{v_t/(1-\beta_2^t)}+\eps}.\nonumber
\end{align}
The superscript $\odot2$ means coordinatewise squaring. Square roots and division are coordinatewise, with $\eps>0$ outside the root. The gradient update is $-\eta_tu_t$; AdamW additionally applies decoupled weight decay \citep{loshchilov2019decoupled}.

At a fixed reference time, the expectation is over a fresh effective-batch gradient at the current parameters. Write $\mu=\E g$, $q=\E g^{\odot2}$, and define
\begin{equation}\label{eq:reference}
 r=\phi(\mu,q),\qquad \phi(x,y)=\frac{x}{\sqrt y+\eps}.
\end{equation}
This \emph{normalized population direction} includes gradient variability in its denominator. In one coordinate, $\mu=1$ and $q=4$ give $r\approx1/2$ for small epsilon; a deterministic gradient of one gives $r\approx1$. Thus the reference combines the gradient's sign with a magnitude determined by its mean and second moment. If clipping is used, $\mu$ is the mean clipped gradient. The reference describes what Adam's processed moments track.

\subsection{The joint effect of moment errors}
In one coordinate, first-order perturbation gives
\begin{equation}\label{eq:scalar}
 \delta r\approx\frac{\delta m}{\sqrt q+\eps}
 -\frac{\mu\,\delta v}{2\sqrt q(\sqrt q+\eps)^2}.
\end{equation}
An overestimated numerator increases the update, whereas an overestimated denominator decreases it. Their errors can therefore partly compensate. Because both moments use the same gradient samples, their covariance matters. The following diagonal matrices give the coordinatewise sensitivities of the normalized direction to each moment:
\begin{equation}\label{eq:derivatives}
 A=\operatorname{diag}\frac1{\sqrt q+\eps},\quad
 B=\operatorname{diag}\frac{-\mu}{2\sqrt q(\sqrt q+\eps)^2}.
\end{equation}
Differentiation assumes $q_i>0$. Practical probes use a separately specified numerical floor.

\section{A local noise--lag balance}\label{sec:theory}
The next question is how averaging affects a direction that moves steadily over a short window. The quantity to minimize is the expected squared difference between the first-order Adam direction and the current normalized population direction. This tracking risk separates smoothing and delay; it supplies a statistical objective for choosing memory, which is subsequently evaluated through training losses.
\subsection{Explicit local assumptions}
The assumptions describe a local approximation. A differentiable moment curve has a constant term and a linear term in its first-order expansion; these motivate affine means over the window. Holding the covariance fixed is the corresponding local approximation for noise. Independence across ages isolates the effect of temporal averaging. This last condition is an additional modeling assumption, since adaptive training can introduce temporal dependence. Dependence between numerator and denominator errors at the same age is retained throughout. Appendix~\ref{app:scope} explains how deviations enter the approximation residual.
\begin{assumption}[Local first-order moment model]\label{ass:local}
Fix an update $t\ge1$ and a history of $t$ observations. Their ages are $k=0,\ldots,t-1$: age zero is the current observation, and age one is one update old. The current moment means $\mu,q$ and their changes per update $\dot\mu,\dot q$ are deterministic, with $q_i>0$. The model's moment observations at age $k$ are $\mu-k\dot\mu+\xi_k$ and $q-k\dot q+\chi_k$. The random vectors $\xi_k,\chi_k\in\R^d$ are centered deviations from these means. Their pairs are independent across ages and share one finite covariance; the two vectors in a pair may be dependent.
\end{assumption}
For example, $\dot\mu_i=0.01$ says that the expected gradient in coordinate $i$ increases by 0.01 per update under this model. The model describes fluctuations of both moments around their local means. Actual gradient observations and their squares impose additional joint structure; the stated covariance can retain it. The theorem is exact for the specified linearized model, and its use for nonlinear Adam is a local approximation.

Define the direction innovation and slope by
\[
 \zeta_k=A\xi_k+B\chi_k,\qquad d_r=A\dot\mu+B\dot q.
\]
Here $\zeta_k$ is the direction error caused by one noisy moment observation, and $d_r$ is the first-order change of the normalized direction in one update. Their size is summarized by
\begin{equation}\label{eq:coefficients}
 V_{\rm pop}=\E\norm{\zeta_k}^2,\qquad D_{\rm pop}=\norm{d_r}^2.
\end{equation}
$V_{\rm pop}$ measures squared normalized-direction fluctuation per observation and includes the cross contribution $2\E\langle A\xi_k,B\chi_k\rangle$. $D_{\rm pop}$ measures squared direction change per update. Random fluctuations accumulate through squared averaging weights; a systematic slope accumulates through the mean age of the observations.

For balanced memory $\beta$, bias correction gives weights
\begin{equation}\label{eq:weights}
 w_{\beta,t}(k)=\frac{(1-\beta)\beta^k}{1-\beta^t},\quad 0\le k<t.
\end{equation}
The weights sum to one. Define their mean age $a_{\beta,t}=\sum_{k=0}^{t-1}w_{\beta,t}(k)k$ and squared-weight sum $c_{\beta,t}=\sum_{k=0}^{t-1}w_{\beta,t}(k)^2$. The first measures delay in updates; the second measures the fraction of independent observation noise retained by the average. For an ordinary average of $N$ observations, this squared-weight sum is $1/N$.

\begin{theorem}[Finite-history balanced tracking risk]\label{thm:risk}
Under Assumption~\ref{ass:local}, the first-order error is
\[
 e_{\beta,t}=\sum_kw_{\beta,t}(k)\zeta_k-a_{\beta,t}d_r,
\]
and its mean squared norm equals
\begin{equation}\label{eq:finite}
 \E\norm{e_{\beta,t}}^2=V_{\rm pop}c_{\beta,t}+D_{\rm pop}a_{\beta,t}^2.
\end{equation}
For $0<\beta<1$,
\begin{align}
 c_{\beta,t}&=\frac{1-\beta}{1+\beta}\frac{1+\beta^t}{1-\beta^t},\label{eq:weightform}\\
 a_{\beta,t}&=\frac\beta{1-\beta}-\frac{t\beta^t}{1-\beta^t}.\nonumber
\end{align}
At $\beta=0$, $c_{0,t}=1$ and $a_{0,t}=0$.
\end{theorem}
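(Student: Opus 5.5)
The proof linearizes the bias-corrected Adam direction around the current population moments $(\mu,q)$, in the sense of equation~\eqref{eq:scalar} extended coordinatewise with the diagonal sensitivities $A,B$ of \eqref{eq:derivatives}. Under Assumption~\ref{ass:local}, and using the bias-corrected weights \eqref{eq:weights}, the bias-corrected numerator is
\[
\sum_k w_{\beta,t}(k)\bigl(\mu-k\dot\mu+\xi_k\bigr)=\mu-a_{\beta,t}\dot\mu+\sum_k w_{\beta,t}(k)\xi_k .
\]
This uses $\sum_k w_{\beta,t}(k)=1$ and the definition of the mean age. The bias-corrected denominator moment is analogous, with $q,\dot q,\chi_k$ in place of $\mu,\dot\mu,\xi_k$. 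The perturbations relative to $(\mu,q)$ are therefore $\delta m=\sum_k w\,\xi_k-a\dot\mu$ and $\delta v=\sum_k w\,\chi_k-a\dot q$. Applying the first-order map $\delta r=A\,\delta m+B\,\delta v$ and grouping terms gives
\[
e_{\beta,t}=\sum_k w_{\beta,t}(k)\,(A\xi_k+B\chi_k)-a_{\beta,t}(A\dot\mu+B\dot q)=\sum_k w_{\beta,t}(k)\zeta_k-a_{\beta,t}d_r ,
\]
which is the stated error.

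For the mean squared norm, expand $\E\norm{e}^2$. The cross term between the random sum and the deterministic vector $a\,d_r$ vanishes because each $\zeta_k$ is centered. The same-age dependence between $\xi_k$ and $\chi_k$ is absorbed into $\zeta_k$, so it causes no difficulty. The square of the random sum is $\sum_{k,l}w(k)w(l)\E\langle\zeta_k,\zeta_l\rangle$. Independence of the pairs across ages makes the off-diagonal terms $\E\langle\zeta_k,\zeta_l\rangle$ with $k\ne l$ vanish. The common covariance makes each diagonal term equal to $V_{\rm pop}$, which yields $V_{\rm pop}c_{\beta,t}$. The deterministic part contributes $a_{\beta,t}^2D_{\rm pop}$, and together these give \eqref{eq:finite}.

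The closed forms are geometric-series computations. For the squared-weight sum,
\[
c_{\beta,t}=\frac{(1-\beta)^2}{(1-\beta^t)^2}\cdot\frac{1-\beta^{2t}}{1-\beta^2}.
\]
Factoring $1-\beta^{2t}=(1-\beta^t)(1+\beta^t)$ and $1-\beta^2=(1-\beta)(1+\beta)$ gives \eqref{eq:weightform}. For the mean age, use the standard identity
\[
\sum_{k=0}^{t-1}k\beta^k=\frac{\beta-t\beta^t+(t-1)\beta^{t+1}}{(1-\beta)^2},
\]
which follows by differentiating the finite geometric sum. Multiplying by $(1-\beta)/(1-\beta^t)$ and checking that
\[
\frac{\beta-t\beta^t+(t-1)\beta^{t+1}}{(1-\beta)(1-\beta^t)}=\frac{\beta}{1-\beta}-\frac{t\beta^t}{1-\beta^t}
\]
by cross-multiplication is routine. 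This algebraic verification is the only step needing care.

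At $\beta=0$, with the convention $0^0=1$, the weights reduce to $w_{0,t}(0)=1$ and all other weights are zero. Hence $c_{0,t}=1$ and $a_{0,t}=0$ directly from the definitions.
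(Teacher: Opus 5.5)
Your proposal is correct and follows the paper's argument essentially step for step. It uses the same averaged deviations $-a_{\beta,t}\dot\mu+\sum_k w_{\beta,t}(k)\xi_k$ pushed through $A,B$, the same removal of cross terms (lag--innovation by centering, cross-age by independence), and the same geometric sums for $c_{\beta,t}$ and $a_{\beta,t}$, where your explicit closed form for $\sum_k k\beta^k$ is just the paper's identity $\beta\frac{d}{d\beta}[(1-\beta^t)/(1-\beta)]$ carried out.
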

\begin{proof}
The averaged first-moment deviation is $-a_{\beta,t}\dot\mu+\sum_kw_{\beta,t}(k)\xi_k$; the second-moment deviation has the same form with $\dot q,\chi_k$. Applying the linear sensitivities $A,B$ gives the displayed error. The innovation sum is centered, so expanding the squared error leaves its variance plus the squared lag $a_{\beta,t}^2\norm{d_r}^2$. For different ages, independence and centering give $\E\langle\zeta_k,\zeta_\ell\rangle=0$. Thus the variance is $\sum_kw_{\beta,t}(k)^2V_{\rm pop}$, proving the risk identity. The weight formulas follow from the geometric sums worked out in Appendix~\ref{app:proofs}.
\end{proof}
Equation~\eqref{eq:finite} is the mechanism: longer memory suppresses noise and increases lag. Scalar weight formulas make this risk directly computable. For any fixed linear map $P$, the theorem also holds in its measurement geometry: replace $\zeta_k,d_r$ by $P\zeta_k,Pd_r$ in~\eqref{eq:coefficients}.

\subsection{From tracking risk to a memory value}
In memory units $H=(1-\beta)^{-1}\ge1$, $\beta$ 0.9 corresponds to $H=10$, and $\beta$ 0.99 to $H=100$. First consider a history long relative to the memory, so that $\beta^t$ is small. Then~\eqref{eq:finite} approaches
\begin{equation}\label{eq:stationary}
 f(H)=\frac{V_{\rm pop}}{2H-1}+D_{\rm pop}(H-1)^2.
\end{equation}
For positive coefficients its unique minimizer solves
\begin{equation}\label{eq:exactroot}
 D_{\rm pop}(H-1)(2H-1)^2=V_{\rm pop}.
\end{equation}
The derivative vanishes there, and the left-hand side increases strictly from zero to infinity on $H\ge1$. This equation is readily solved in one dimension. For a closed-form approximation, when $H$ is large relative to one replace $2H-1$ by $2H$ and $H-1$ by $H$. This second simplification gives
\begin{equation}\label{eq:leading}
 f_0(H)=\frac{V_{\rm pop}}{2H}+D_{\rm pop}H^2,
 \quad H_\star=\left(\frac{V_{\rm pop}}{4D_{\rm pop}}\right)^{1/3}.
\end{equation}
Indeed, $f_0'(H)=-V_{\rm pop}/(2H^2)+2D_{\rm pop}H$ vanishes when $4D_{\rm pop}H^3=V_{\rm pop}$. More noise favors a longer average; faster direction change favors a shorter one. For $V_{\rm pop}=0.18$ and $D_{\rm pop}=4\times10^{-6}$ this gives $H_\star\approx22.41$, hence $\beta_\star\approx0.9554$. These values illustrate the local model. Appendix~\ref{app:proofs} gives the constrained and zero-coefficient cases.

\subsection{Sensitivity to coefficient errors}
Finite measurements introduce coefficient uncertainty. The next result quantifies its effect on the continuous memory and the resulting model risk.
\begin{proposition}[Continuous memory sensitivity]\label{prop:sensitivity}
For the interior leading model, suppose positive estimates satisfy $|\log(\widetilde V/V_{\rm pop})|\le\epsilon_V$ and $|\log(\widetilde D/D_{\rm pop})|\le\epsilon_D$. Their minimizer $\widetilde H=(\widetilde V/(4\widetilde D))^{1/3}$ satisfies
\[
 |\log(\widetilde H/H_\star)|\le(\epsilon_V+\epsilon_D)/3.
\]
Writing $z=\widetilde H/H_\star$, its relative model risk is
\begin{equation}\label{eq:sensitivity}
 \frac{f_0(\widetilde H)}{f_0(H_\star)}=\frac{2/z+z^2}{3}.
\end{equation}
\end{proposition}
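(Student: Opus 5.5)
The plan is to work entirely with logarithms for the first claim and then substitute the closed form of $H_\star$ into $f_0$ for the second. Both parts use only the explicit minimizer in \eqref{eq:leading}; no further properties of Adam or of the local model in Assumption~\ref{ass:local} are needed.

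For the first claim, divide the two cube-root expressions. The factor $4$ cancels, giving
\[
 \log(\widetilde H/H_\star)=\tfrac13\bigl[\log(\widetilde V/V_{\rm pop})-\log(\widetilde D/D_{\rm pop})\bigr].
\]
The triangle inequality then yields $|\log(\widetilde H/H_\star)|\le(\epsilon_V+\epsilon_D)/3$. Positivity of all four quantities makes every logarithm well defined, and this is where the interior and positive-estimate hypothesis is used.

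For the risk ratio, I will express $f_0$ at a general point $H=zH_\star$ in terms of $H_\star$ alone. The key identity is the first-order condition $4D_{\rm pop}H_\star^3=V_{\rm pop}$, which gives $D_{\rm pop}=V_{\rm pop}/(4H_\star^3)$. Substituting into \eqref{eq:leading} gives
\[
 f_0(zH_\star)=\frac{V_{\rm pop}}{2zH_\star}+\frac{V_{\rm pop}z^2}{4H_\star},
\]
so $f_0(zH_\star)=\frac{V_{\rm pop}}{4H_\star}(2/z+z^2)$. Setting $z=1$ gives $f_0(H_\star)=\frac{3V_{\rm pop}}{4H_\star}$, and dividing the two expressions produces \eqref{eq:sensitivity}.

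The identity holds for every $z>0$, so it does not depend on how $\widetilde H$ was obtained. Combined with the first claim, it bounds the relative risk by evaluating $(2/z+z^2)/3$ at the endpoints $z=e^{\pm(\epsilon_V+\epsilon_D)/3}$. That function is convex with its minimum $1$ at $z=1$. There is no real obstacle. The only point needing care is that the ratio uses the true coefficients $V_{\rm pop},D_{\rm pop}$ in $f_0$, evaluated at the estimated minimizer. So the substitution must use the true first-order condition, not the estimated one.
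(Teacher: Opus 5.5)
Your proposal is correct and follows the paper's proof: the log-ratio plus triangle inequality for the first claim, and substitution of the first-order condition $4D_{\rm pop}H_\star^3=V_{\rm pop}$ into $f_0(zH_\star)$ for the second. You normalize by $V_{\rm pop}/(4H_\star)$ where the paper uses the equal quantity $D_{\rm pop}H_\star^2$, and your closing convexity remark about the endpoints $z=e^{\pm(\epsilon_V+\epsilon_D)/3}$ is also correct, though it goes beyond what the proposition asserts.
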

\begin{proof}
Dividing the two cubic formulas and taking logs gives $\log(\widetilde H/H_\star)=[\log(\widetilde V/V_{\rm pop})-\log(\widetilde D/D_{\rm pop})]/3$. The triangle inequality gives the bound. To compare risks, use the optimality identity $V_{\rm pop}/(2H_\star)=2D_{\rm pop}H_\star^2$. Substituting $\widetilde H=zH_\star$ makes the noise term $2D_{\rm pop}H_\star^2/z$ and the lag term $z^2D_{\rm pop}H_\star^2$. Their sum divided by $3D_{\rm pop}H_\star^2$ proves~\eqref{eq:sensitivity}.
\end{proof}
The numbers $\epsilon_V,\epsilon_D$ describe multiplicative coefficient accuracy on a log scale; a factor-two error has magnitude $\log2$. The cube root attenuates such errors. More importantly, excess relative risk equals $(z-1)^2(z+2)/(3z)$: near the optimum, risk changes quadratically with memory error. Doubling $V_{\rm pop}$ with accurate $D_{\rm pop}$ changes memory by $2^{1/3}$ and increases leading risk by about 5.8\%. Thus useful memory calibration can tolerate appreciable coefficient error. The practical rule additionally has sampling and grid-rounding effects.

\section{Estimating the coefficients from a pilot}\label{sec:rule}
The memory rule in~\eqref{eq:leading} requires two coefficients: the variability of the normalized direction at fixed parameters, and its change as training proceeds. We estimate the first by comparing gradient groups at one checkpoint and the second by comparing checkpoints. This section specifies the measurements and derives the scaling of both estimates; Section~\ref{sec:measurements} then turns them into a $\beta$.

\subsection{Measurement design and moment estimates}
Let $T_{\rm p}$ be an integer pilot length in optimizer updates and $\beta_{\rm pilot}\in[0,1)$ its shared memory. Choose distinct integer lags $1\le n_1,\ldots,n_K\le T_{\rm p}$, with $K\ge2$, and measure at $T_{\rm p}$ and $T_{\rm p}-n_\ell$. Write $L=\max_\ell n_\ell$ for the longest lag. At each checkpoint $s$, freeze the parameters and collect $m\ge2$ fresh gradients $g_s^{(1)},\ldots,g_s^{(m)}$. Each uses the same effective batch and gradient processing as a training update, including accumulation and clipping when prescribed. The pilot length, pilot $\beta$, lags, and probe allocation are measurement hyperparameters.

The sample means estimate the two population moments together:
\[
 \widehat\mu_s=\frac1m\sum_{i=1}^m g_s^{(i)},\qquad
 \widehat q_s=\frac1m\sum_{i=1}^m(g_s^{(i)})^{\odot2}.
\]
Apply the same normalization as in~\eqref{eq:reference}:
\begin{equation}\label{eq:probe}
 \widehat r_s=\phi(\widehat\mu_s,\widehat q_s)
 =\frac{\widehat\mu_s}{\sqrt{\widehat q_s}+\eps}.
\end{equation}
The optimizer's $\eps>0$ makes this defined even when a sample second moment is zero. Estimating both moments from the same gradients preserves their joint fluctuations.

Split the $m$ gradients into $G\ge2$ equal groups, where $G$ divides $m$. For group $j$, compute its moment means and normalized direction $\widehat r_s^{(j)}$ by the same formulas restricted to its $m/G$ gradients. A fixed linear map $P$ specifies the measurement geometry; it may be the identity or a storage-saving projection. Define the pooled vector $z_s=P\widehat r_s$ and group vectors $z_s^{(j)}=P\widehat r_s^{(j)}$. The same $P$ is used at every checkpoint. Theorem~\ref{thm:risk} applies in this geometry with noise coefficient $V_P=\E\norm{P\zeta_k}^2$ and slope coefficient $D_P=\norm{Pd_r}^2$. All norms below refer to these measured vectors.

\subsection{Estimating variability at fixed parameters}
At the final pilot checkpoint, use
\begin{equation}\label{eq:empV}
 \widehat V_{\rm emp}=\frac{m}{G-1}\frac1G
 \sum_{j=1}^G\norm{z_{T_{\rm p}}^{(j)}-z_{T_{\rm p}}}^2.
\end{equation}
The multiplier corrects for averaging inside each group. In the local linearized model, a group averages $m/G$ independent direction innovations, so its variance trace is $GV_P/m$. Centering $G$ independent groups around their mean retains the fraction $(G-1)/G$ of this variance. The expected mean squared group deviation is therefore $(G-1)V_P/m$; multiplying by $m/(G-1)$ recovers $V_P$. This is the same degrees-of-freedom correction as in sample variance, together with the correction for group size.

Nonlinear normalization adds a second contribution: normalizing pooled moments can differ from averaging separately normalized groups. The following identity quantifies that contribution without an approximation.
\begin{proposition}[Dispersion decomposition]\label{prop:pool}
For group vectors $z^{(j)}$, their mean $\bar z=G^{-1}\sum_jz^{(j)}$, and any pooled vector $z$,
\begin{align}\label{eq:pool}
 \frac1G\sum_j\norm{z^{(j)}-z}^2
 &=\frac1G\sum_j\norm{z^{(j)}-\bar z}^2\nonumber\\
 &\quad+\norm{\bar z-z}^2.
\end{align}
\end{proposition}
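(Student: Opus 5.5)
The plan is to prove the identity by the standard bias--variance (parallel-axis) expansion, adding and subtracting the group mean $\bar z$ inside each squared norm. For each group $j$ write
\[
 z^{(j)}-z=\bigl(z^{(j)}-\bar z\bigr)+\bigl(\bar z-z\bigr),
\]
and expand the squared Euclidean norm as
\[
 \norm{z^{(j)}-z}^2=\norm{z^{(j)}-\bar z}^2+2\langle z^{(j)}-\bar z,\bar z-z\rangle+\norm{\bar z-z}^2 .
\]
The norm here is the one on the measured vectors in the geometry fixed by $P$. Any inner-product norm works, and no randomness or model assumption enters.

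Next, average over $j=1,\ldots,G$. The first term gives the within-group dispersion about $\bar z$. The last term does not depend on $j$, so its average is just $\norm{\bar z-z}^2$.

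For the cross term, pull the fixed vector $\bar z-z$ out of the inner product. This leaves $2\langle G^{-1}\sum_j(z^{(j)}-\bar z),\bar z-z\rangle$. By the definition of $\bar z$, the first argument is $\bar z-\bar z=0$, so the cross term vanishes. This step is the crux, and it uses only linearity of the inner product and the definition of the mean. It would fail only if the norm were not induced by an inner product, and here it is.

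Nothing in the argument uses how $z$ is formed, so the identity holds for any pooled vector, in particular $z=z_{T_{\rm p}}=P\widehat r_{T_{\rm p}}$. The term $\norm{\bar z-z}^2$ is therefore exactly the discrepancy between normalizing the pooled moments and averaging the separately normalized groups. I expect no real obstacle. The only care needed is to state that the identity is deterministic and exact, so that the later use in~\eqref{eq:empV} can attribute the nonlinear-normalization contribution without any approximation.
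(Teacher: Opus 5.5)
Your proposal is correct and is the same argument as the paper's proof. Both write $z^{(j)}-z=(z^{(j)}-\bar z)+(\bar z-z)$, expand the square, and remove the cross term using $\sum_j(z^{(j)}-\bar z)=0$.
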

\begin{proof}
Write $z^{(j)}-z=(z^{(j)}-\bar z)+(\bar z-z)$ and expand the square. The cross terms average to zero because $\sum_j(z^{(j)}-\bar z)=0$. The remaining terms give~\eqref{eq:pool}.
\end{proof}
Thus~\eqref{eq:empV} includes both group scatter and pooling displacement. In a linear observation model the displacement vanishes and the estimate is unbiased for $V_P$. In the nonlinear implementation the identity makes the pooling displacement measurable. Appendix~\ref{app:ablations} reports that contribution for every workload.

\subsection{Estimating change between checkpoints}
For each observed lag $n$, measure the squared secant
\[
 y_n=\norm{z_{T_{\rm p}}-z_{T_{\rm p}-n}}^2/n^2.
\]
The division by $n^2$ puts different time separations on a common per-update scale. Consider the local observation model $z_s=Pr_s+\epsilon_s$, where $r_s=\phi(\E g_s,\E g_s^{\odot2})$ is the population direction at checkpoint $s$ and $\epsilon_s$ is its measurement error. Assume an affine projected direction, $P(r_{T_{\rm p}}-r_{T_{\rm p}-n})=nPd_r$, and centered independent errors with a common finite expected squared norm. Define $C=2\E\norm{\epsilon_{T_{\rm p}}}^2$, the combined error energy of two endpoints. Then
\begin{equation}\label{eq:endpoint}
 \E y_n=D_P+C/n^2.
\end{equation}
Indeed, the secant is $Pd_r+(\epsilon_{T_{\rm p}}-\epsilon_{T_{\rm p}-n})/n$. Its cross term has zero expectation, and the two endpoint error energies add. A persistent slope gives the intercept $D_P$; endpoint uncertainty gives the inverse-square term.

This relation motivates a two-coefficient least-squares fit:
\begin{equation}\label{eq:lagfit}
 (I_{\rm fit},C_{\rm fit})=\underset{I,C\in\R}{\arg\min}
 \sum_{\ell=1}^K\bigl(y_{n_\ell}-I-C/n_\ell^2\bigr)^2.
\end{equation}
Distinct lags give a full-rank design. Under~\eqref{eq:endpoint}, the unconstrained fit is unbiased for $(D_P,C)$: write the response as the design times these coefficients plus a mean-zero residual, then apply the linear least-squares map. This argument permits correlated responses, as occurs when secants share the final checkpoint. Least squares estimates the mean lag relation; its use here rests on that relation rather than a Gaussian likelihood assumption.

Enforce the two components' nonnegative interpretation by setting $\widehat I=\max(0,I_{\rm fit})$ and $\widehat C=\max(0,C_{\rm fit})$. The implemented estimate is the fitted change at the longest observed lag:
\begin{equation}\label{eq:empD}
 \widehat D_{\rm emp}=\widehat I+\widehat C/L^2.
\end{equation}
It estimates change at resolution $L$, combining persistent change with endpoint uncertainty. The subscript ``emp'' distinguishes this finite-probe quantity from the population slope $D_P$. The clipping is a specified post-processing step; the unbiasedness calculation above concerns the unconstrained fit. The next section explains why the rule retains both components.

\section{From measured change to a fixed memory}\label{sec:measurements}
With finitely many probes, a small population slope and uncertain endpoints can produce comparable measured changes. We use the total change observable at lag $L$ in the memory objective. The first result identifies the resulting allowance relative to the ideal local tracking risk; the rule then follows by the same minimization as in Section~\ref{sec:theory}.

\subsection{A tracking objective at measurement resolution}
Under the observation model of Section~\ref{sec:rule}, define $D_L=\E y_L=D_P+C/L^2$. This is an expected squared measured slope, rather than a new population derivative. Replacing $D_P$ by $D_L$ charges delay at the resolution of the pilot measurements.
\begin{proposition}[Finite-resolution lag allowance]\label{prop:resolution}
Under Assumption~\ref{ass:local} and the centered, independent, equal-variance affine endpoint model in~\eqref{eq:endpoint}, fix $1\le L\le t$. For each $\beta$ and history length $t$, define
\[
 U_{L,t}(\beta)=V_Pc_{\beta,t}+D_La_{\beta,t}^2.
\]
This measurement-resolution objective satisfies
\begin{equation}\label{eq:resolution}
 U_{L,t}(\beta)=\E\norm{Pe_{\beta,t}}^2+
 \frac{C}{L^2}a_{\beta,t}^2.
\end{equation}
\end{proposition}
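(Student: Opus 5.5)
The identity follows from Theorem~\ref{thm:risk} in the measurement geometry $P$ together with the definition $D_L=D_P+C/L^2$. No new probabilistic estimate is needed; only the split of the measured slope into its population and endpoint parts has to be tracked.

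First, invoke Theorem~\ref{thm:risk} with the projection $P$. The remark after its proof says the theorem holds with $\zeta_k,d_r$ replaced by $P\zeta_k,Pd_r$. Under Assumption~\ref{ass:local}, the projected first-order error is therefore
\[
Pe_{\beta,t}=\sum_k w_{\beta,t}(k)P\zeta_k-a_{\beta,t}Pd_r .
\]
Because the innovations are centered and independent across ages, its mean squared norm is
\[
\E\norm{Pe_{\beta,t}}^2=V_Pc_{\beta,t}+D_Pa_{\beta,t}^2,
\]
with $V_P=\E\norm{P\zeta_k}^2$ and $D_P=\norm{Pd_r}^2$. For $\beta=0$ we have $a_{0,t}=0$ and $c_{0,t}=1$, so this case is covered by the same formula.

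Second, identify $D_L$. By the endpoint relation~\eqref{eq:endpoint}, which holds under the stated centered, independent, equal-variance affine model, $D_L=\E y_L=D_P+C/L^2$. The hypothesis $1\le L\le t$ only guarantees that the lag $L$ lies within the available history, so $y_L$ is defined; the algebra does not use it otherwise.

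Third, substitute into the definition of $U_{L,t}(\beta)$:
\[
U_{L,t}(\beta)=V_Pc_{\beta,t}+\bigl(D_P+C/L^2\bigr)a_{\beta,t}^2
=\E\norm{Pe_{\beta,t}}^2+\frac{C}{L^2}a_{\beta,t}^2 ,
\]
which is~\eqref{eq:resolution}.

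The main point requiring care is consistency of notation between the two models. The slope $Pd_r$ in the endpoint model, where $P(r_{T_{\rm p}}-r_{T_{\rm p}-n})=nPd_r$, must be the same projected first-order slope $P(A\dot\mu+B\dot q)$ that appears in Assumption~\ref{ass:local}. The plan is to state this identification explicitly: both models describe the same local affine population direction, so $D_P$ is the same in each. The allowance $C a_{\beta,t}^2/L^2$ is nonnegative, so $U_{L,t}$ is always at least the ideal projected tracking risk.
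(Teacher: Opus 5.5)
Your proposal is correct and follows the paper's proof: both substitute $D_L=D_P+C/L^2$ from the endpoint calculation into $U_{L,t}$ and identify the remaining $V_Pc_{\beta,t}+D_Pa_{\beta,t}^2$ as the tracking risk of Theorem~\ref{thm:risk} measured through $P$. Your explicit identification of the endpoint-model slope $Pd_r$ with $P(A\dot\mu+B\dot q)$ makes precise a step the paper leaves implicit through its shared notation $d_r$.
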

\begin{proof}
The endpoint calculation gives $D_L=D_P+C/L^2$. Substituting this into the definition of $U_{L,t}$ gives $V_Pc_{\beta,t}+D_Pa_{\beta,t}^2+(C/L^2)a_{\beta,t}^2$. The first two terms are the tracking risk in Theorem~\ref{thm:risk}, measured through $P$.
\end{proof}
The additional term is the endpoint uncertainty multiplied by squared delay. Longer lags reduce it as $L^{-2}$; more accurate endpoint measurements reduce $C$. The identity concerns expected coefficients under the stated local model. Applying the stationary and large-memory approximations yields $V_P/(2H)+D_LH^2$. The measured coefficients from Section~\ref{sec:rule} supply a plug-in version of this objective. Appendix~\ref{app:resolution} quantifies the effect of finite coefficient accuracy on its memory decision.

\subsection{Selecting \texorpdfstring{$\beta$}{β} and specifying the algorithm}
Given the two estimates, minimize
\begin{equation}\label{eq:empiricalscore}
 \widehat f(H)=\frac{\widehat V_{\rm emp}}{2H}+\widehat D_{\rm emp}H^2.
\end{equation}
For positive coefficients its unconstrained minimizer is
\begin{equation}\label{eq:rule}
 \widehat H_{\rm emp}=\left(\frac{\widehat V_{\rm emp}}{4\widehat D_{\rm emp}}\right)^{1/3}.
\end{equation}
The cubic exponent and factor four come from differentiating~\eqref{eq:empiricalscore}, exactly as in~\eqref{eq:leading}. The measurement design determines the coefficients; minimization determines the memory.

Choose a finite nonempty $\beta$ grid $\mathcal G\subset[0,1)$, with endpoints $\beta_{\min}$ and $\beta_{\max}$. Restrict memory to $[1/(1-\beta_{\min}),1/(1-\beta_{\max})]$, convert it using $\beta=1-1/H$, and round to the nearest grid value in $\beta$ distance, breaking ties toward the smaller $\beta$. This is the discretization used in the evaluation. If an estimated coefficient is zero, use the minimizing boundary specified in Appendix~\ref{app:measurement}.

Table~\ref{tab:algorithm} gives the general procedure. Its measurement hyperparameters are $T_{\rm p}$, $\beta_{\rm pilot}$, the lags, $m$, $G$, and $P$; the candidate range is set by $\mathcal G$. Section~\ref{sec:experiments} specifies one common instance for all workloads and evaluates the resulting fixed-$\beta$ trainings.
\begin{table}[t]
\centering
\caption{Early selection of a fixed shared memory. Measurements are made during the pilot; the full training uses the selected $\beta$ throughout.}
\label{tab:algorithm}
\begin{tabular}{p{0.06\columnwidth}p{0.83\columnwidth}}
\toprule
1 & Run $T_{\rm p}$ pilot updates with shared $\beta_{\rm pilot}$. At $T_{\rm p}$ and $T_{\rm p}-n_\ell$, collect $m$ probe gradients with parameters fixed.\\
2 & Compute pooled and $G$ group moment estimates, normalize them using~\eqref{eq:probe}, and store their images under $P$.\\
3 & Estimate variability using~\eqref{eq:empV}; fit~\eqref{eq:lagfit} and compute change using~\eqref{eq:empD}.\\
4 & Apply~\eqref{eq:rule}, restrict memory to the candidate range, convert to $\beta$, and round to $\mathcal G$.\\
5 & Restart from the pilot's initialization and use this fixed shared $\beta$ with the prescribed full-training recipe.\\
\bottomrule
\end{tabular}
\end{table}

\section{Empirical evaluation}\label{sec:experiments}
\subsection{Workloads and comparison protocol}
We evaluate early memory selection on eleven vision and language workloads: two Llama-style causal models, two GPT-2-style causal models, two ResNets, two ViTs, Swin-T, EfficientNet-B0, and T5-small. Tasks are image classification, causal language modeling, and span-corruption denoising. Table~\ref{tab:cases} reports each selected $\beta$ and its gap to the workload-specific grid reference. Appendix~\ref{app:reproduction} specifies architectures, data, optimizer settings, schedules, and validation.

\begin{table*}[t]
\centering
\caption{Early memory choices and their distance to the workload-specific grid reference. Pilot and candidate trainings use the same initialization (seed 1). Oracle $\beta$ minimizes the smallest recorded validation loss over the eleven grid candidates and is used only for evaluation. The selector uses early gradient measurements. Gaps are percentages.}
\label{tab:cases}
\begin{tabular*}{\textwidth}{@{\extracolsep{\fill}}llrrr@{}}
\toprule Network & Dataset & Oracle $\beta$ & Selected $\beta$ & Relative gap (\%)\\\midrule
EfficientNet-B0 & Cars & 0.82217 & 0.96838 & 1.0124\\
Llama60M & C4 & 0.94377 & 0.90000 & 0.2582\\
Llama60M & SlimPajama & 0.94377 & 0.90000 & 0.1175\\
NanoGPT & OpenWebText & 0.94377 & 0.90000 & 0.1157\\
NanoGPT & WikiText-103 & 0.94377 & 0.90000 & 0.1613\\
ResNet50 & Food-101 & 0.99684 & 0.96838 & 1.0613\\
ResNet50 & ImageNet100 & 0.96838 & 0.96838 & 0.0000\\
Swin-T & Caltech-256 & 0.94377 & 0.94377 & 0.0000\\
T5-small & BookCorpus & 0.98222 & 0.96838 & 0.0039\\
ViT-B/16 & CIFAR-100 & 0.94377 & 0.94377 & 0.0000\\
ViT-B/16 & TinyImageNet & 0.99000 & 0.94377 & 1.3339\\
\bottomrule\end{tabular*}

\end{table*}

The common measurement design is $T_{\rm p}=200$, $\beta_{\rm pilot}=0.95$, $m=16$, $G=4$, and lags $\{20,50,100\}$, so $L=100$. This gives checkpoints 100, 150, 180, and 200. The fixed map $P$ is a 1,024-coordinate signed-bucket projection with tensor-size balancing; Appendix~\ref{app:measurement} defines it exactly. Epsilon is $10^{-8}$, as in the training optimizer. The four-group allocation, decision time, and measurement spacing are examined in Appendix~\ref{app:ablations}.

The shared-memory grid is
\begin{align*}
\mathcal G=\{&0.68377,0.82217,0.90000,0.94377,\\
 &0.96838,0.98222,0.99000,0.99438,\\
 &0.99684,0.99822,0.99900\}.
\end{align*}
It is approximately logarithmic in memory units. Shared $\beta$ 0.94377 is the grid representative nearest the empirical reference 0.95; $\beta$ 0.96838 is the best benchmark-wide constant by mean gap after examining all eleven fixed values. These serve distinct roles: a near-0.95 reference and a stronger hindsight comparator. Table~\ref{tab:summary} also includes $\beta$ 0.9, and Appendix~\ref{app:fixedgrid} reports the complete grid. For each workload, candidates use the same prescribed learning-rate recipe, and the selector chooses their shared $\beta$.

For each workload, the pilot and every grid candidate start from the same initialization, generated with seed 1 (the same pretrained checkpoint for T5). This pairing holds the initial weights fixed when comparing memories. Let $L_e(\beta)$ be workload $e$'s smallest recorded validation loss, and $L_e^\star=\min_{\beta\in\mathcal G}L_e(\beta)$ its finite-grid reference. Define
\begin{equation}\label{eq:gap}
 \operatorname{gap}_e(\beta)=100\cdot\dfrac{L_e(\beta)-L_e^\star}{L_e^\star}.
\end{equation}
Early gradient measurements determine $\beta$; candidate validation losses then evaluate that decision. The finite-grid reference gives the evaluation scale. All workloads use the same 16-probe budget, checkpoints, objective, and rounding. Reported metrics are equal-workload mean, maximum, and CVaR$_{25}$, the average of the worst $\lceil11/4\rceil=3$ gaps. These are descriptive statistics from a retrospective fixed-$\beta$ evaluation. Appendix~\ref{app:paired} reports common-seed comparisons (Table~\ref{tab:paired}); Appendix~\ref{app:timingfull} analyzes decision timing, Appendix~\ref{app:spacing} measurement spacing, and Appendix~\ref{app:trajectory} validation summaries.
\begin{table}[t]
\centering
\caption{Workload-specific calibration versus universal fixed memories. Gaps are percentages. Best fixed minimizes mean gap over all eleven fixed grid choices using their validation outcomes. Lower is better.}
\label{tab:summary}
\setlength{\tabcolsep}{4pt}
\begin{tabular}{lrrr}
\toprule & Mean & Maximum & CVaR$_{25}$\\\midrule
Fixed 0.90000 & 1.4292 & 3.8770 & 3.3543\\
Fixed 0.94377 & 0.6236 & 2.4853 & 2.0409\\
Best fixed 0.96838 & 0.5461 & 2.1862 & 1.4304\\
Tracking rule & \textbf{0.3695} & \textbf{1.3339} & \textbf{1.1359}\\
\bottomrule
\end{tabular}
\end{table}

\subsection{Improving the fixed-memory compromise}
The goal is a useful task-specific memory: reduce large calibration errors while keeping well-served tasks close to their grid reference. Table~\ref{tab:cases} shows the oracle $\beta$, the selected $\beta$, and the resulting gap for every task. The rule selects 0.9 for the four causal language-model tasks, 0.96838 for EfficientNet, both ResNets, and T5, and 0.94377 for Swin and both ViTs. Thus one common calculation produces different memory horizons from different measured dynamics.

Table~\ref{tab:summary} makes the fixed-memory compromise visible. With $\beta$ 0.9 the largest gap occurs on ViT/TinyImageNet; with 0.94377 it occurs on ResNet/Food; with 0.96838 it occurs on Swin/Caltech. Changing the constant changes which task carries the largest calibration cost. The measured selector reduces mean gap by 40.7\% against the near-0.95 reference and by 32.3\% against the best universal constant. Its maximum gap is 39.0\% below the smallest maximum attained by any fixed grid memory, and its CVaR is also smaller than every fixed value. Workload-specific calibration therefore improves both the average compromise and its largest residual error on this suite.

The tasks contributing the gains change with the comparator, as expected when correcting a universal compromise. Against 0.94377, ResNet losses improve by 1.3895\% on Food-101 and 2.2517\% on ImageNet100, while Swin and both ViTs retain the reference configuration. Against 0.96838, the ResNet choices coincide and the gains move to Swin and ViT/CIFAR-100, whose losses improve by 2.1395\% and 0.4020\%. The four causal tasks have gaps below 0.259\%. The result is useful calibration across tasks rather than another universal constant. Appendix~\ref{app:compromise} provides every signed comparison, including the smaller increases offset by these corrections.

\subsection{Stability of calibration}
We examine coefficient sensitivity, grid decisions, and validation gaps separately. Proposition~\ref{prop:sensitivity} quantifies coefficient sensitivity under the local score. A 10\% perturbation of each observed variability-to-change ratio preserves every selected $\beta$. The stationary root, leading-grid score, stationary-grid score, and exact finite-history grid score at $t=200$ also recover the same eleven choices (Appendix~\ref{app:ablations}).

Across 351 decision updates from 150 to 500 on the same pilots, mean gap improves over 0.94377 at 339 updates (96.6\%) and CVaR improves at all 351. Mean gap improves over the best global fixed value 0.96838 at 288 updates (82.1\%). Appendix~\ref{app:timingfull} gives the dense audit. Using the final loss or the mean of the last three losses retains mean-gap reductions of 20.5\% and 11.0\% against their respective best fixed memories (Appendix~\ref{app:trajectory}). These are sensitivity checks under matched initialization; Appendix~\ref{app:paired} separately examines reuse across initializations.

\subsection{Cost and required information}\label{sec:cost}
A 200-update pilot supplies four measured checkpoints, requiring 64 additional effective-batch gradients. Moment aggregation and fixed projection cost $O(d)$ operations and $O(d)$ additional storage for a fixed probe budget. Each checkpoint stores one pooled and four group directions of 1,024 coordinates; the scalar fit and memory conversion have constant cost.

After restarting, a full training of $T$ updates has gradient-equivalent surplus $100\cdot(T_{\rm p}+m(K+1))/T\%$. Our design gives $100\cdot(200+64)/T\%$, or 2.64\% for $T=10,000$. This counts one probe as one training gradient evaluation; loading, transfers, sketches, and restarting add wall-clock costs. Appendix~\ref{app:measurement} describes the dense measurement record and compact collection.

\section{Discussion and conclusion}
Early gradient measurements choose one shared $\beta$ for full training. A joint-moment model explains its noise--lag objective; grouped probes and a temporal fit make the coefficients measurable. On the matched benchmark, the rule reduces mean, maximum, and tail gap compared with every fixed grid memory. Different tasks supply the gains against different constants, showing how measured memory selection improves the compromise made by a universal default.

\section*{Acknowledgements}
This research was funded by the projects PID2023-146569NB-C21 and PID2023-146569NB-C22 supported by MICIU/AEI/10.13039/501100011033 and ERDF/UE. Alberto Fernández-Hernández was supported by the predoctoral grant PREP2023-001826 supported by MICIU/AEI/10.13039/501100011033 and ESF+. Cristian Pérez-Corral received support from the \textit{Conselleria de Educación, Cultura, Universidades y Empleo} (reference CIACIF/2024/412) through the European Social Fund Plus 2021–2027 (FSE+) program of the \textit{Comunitat Valenciana}. Jose I. Mestre was supported by the predoctoral grant ACIF/2021/281 of the \emph{Generalitat Valenciana}. Manuel F. Dolz was supported by grant {\small CNS2025-165098} funded by {\small MICIU/AEI/10.13039/501100011033} and by the Plan Gen--T grant {\small CIDEXG/2022/013} of the \emph{Generalitat Valenciana}.

\section*{AI use statement}
The authors take full responsibility for the scientific content and final manuscript, including AI-assisted material. Generative AI supported theoretical development, data analysis, literature search, writing, and internal review.
% AUTHOR CONFIRMATION: once every coauthor has completed the content audit,
% add: "The authors have reviewed all AI-assisted content for correctness and consistency."

\bibliographystyle{apalike}
\bibliography{references}

@inproceedings{kingma2015adam,
  title={Adam: A Method for Stochastic Optimization},
  author={Kingma, Diederik P. and Ba, Jimmy},
  booktitle={International Conference on Learning Representations},
  year={2015}
}

@inproceedings{loshchilov2019decoupled,
  title={Decoupled Weight Decay Regularization},
  author={Loshchilov, Ilya and Hutter, Frank},
  booktitle={International Conference on Learning Representations},
  year={2019}
}

@inproceedings{orvieto2025secretsauce,
  title={In Search of Adam's Secret Sauce},
  author={Orvieto, Antonio and Gower, Robert M.},
  booktitle={Advances in Neural Information Processing Systems},
  year={2025}
}

@inproceedings{balles2018dissecting,
  title={Dissecting Adam: The Sign, Magnitude and Variance of Stochastic Gradients},
  author={Balles, Lukas and Hennig, Philipp},
  booktitle={International Conference on Machine Learning},
  year={2018}
}

@inproceedings{malladi2022sdes,
  title={On the SDEs and Scaling Rules for Adaptive Gradient Algorithms},
  author={Malladi, Sadhika and Lyu, Kaifeng and Panigrahi, Abhishek and Arora, Sanjeev},
  booktitle={Advances in Neural Information Processing Systems},
  year={2022}
}

@inproceedings{he2016deep,
  title={Deep Residual Learning for Image Recognition},
  author={He, Kaiming and Zhang, Xiangyu and Ren, Shaoqing and Sun, Jian},
  booktitle={Proceedings of the IEEE Conference on Computer Vision and Pattern Recognition},
  year={2016}
}

@inproceedings{tan2019efficientnet,
  title={EfficientNet: Rethinking Model Scaling for Convolutional Neural Networks},
  author={Tan, Mingxing and Le, Quoc V.},
  booktitle={International Conference on Machine Learning},
  year={2019}
}

@inproceedings{dosovitskiy2021image,
  title={An Image is Worth 16x16 Words: Transformers for Image Recognition at Scale},
  author={Dosovitskiy, Alexey and Beyer, Lucas and Kolesnikov, Alexander and Weissenborn, Dirk and Zhai, Xiaohua and Unterthiner, Thomas and Dehghani, Mostafa and Minderer, Matthias and Heigold, Georg and Gelly, Sylvain and Uszkoreit, Jakob and Houlsby, Neil},
  booktitle={International Conference on Learning Representations},
  year={2021}
}

@inproceedings{liu2021swin,
  title={Swin Transformer: Hierarchical Vision Transformer using Shifted Windows},
  author={Liu, Ze and Lin, Yutong and Cao, Yue and Hu, Han and Wei, Yixuan and Zhang, Zheng and Lin, Stephen and Guo, Baining},
  booktitle={Proceedings of the IEEE/CVF International Conference on Computer Vision},
  year={2021}
}

@article{raffel2020exploring,
  title={Exploring the Limits of Transfer Learning with a Unified Text-to-Text Transformer},
  author={Raffel, Colin and Shazeer, Noam and Roberts, Adam and Lee, Katherine and Narang, Sharan and Matena, Michael and Zhou, Yanqi and Li, Wei and Liu, Peter J.},
  journal={Journal of Machine Learning Research},
  volume={21},
  number={140},
  pages={1--67},
  year={2020}
}

@article{touvron2023llama,
  title={LLaMA: Open and Efficient Foundation Language Models},
  author={Touvron, Hugo and Lavril, Thibaut and Izacard, Gautier and Martinet, Xavier and Lachaux, Marie-Anne and Lacroix, Timothee and Roziere, Baptiste and Goyal, Naman and Hambro, Eric and Azhar, Faisal and Rodriguez, Aurelien and Joulin, Armand and Grave, Edouard and Lample, Guillaume},
  journal={arXiv preprint arXiv:2302.13971},
  year={2023}
}

@techreport{krizhevsky2009learning,
  title={Learning Multiple Layers of Features from Tiny Images},
  author={Krizhevsky, Alex},
  institution={University of Toronto},
  year={2009}
}

@inproceedings{deng2009imagenet,
  title={ImageNet: A Large-Scale Hierarchical Image Database},
  author={Deng, Jia and Dong, Wei and Socher, Richard and Li, Li-Jia and Li, Kai and Fei-Fei, Li},
  booktitle={Proceedings of the IEEE Conference on Computer Vision and Pattern Recognition},
  year={2009}
}

@inproceedings{bossard2014food,
  title={Food-101: Mining Discriminative Components with Random Forests},
  author={Bossard, Lukas and Guillaumin, Matthieu and Van Gool, Luc},
  booktitle={European Conference on Computer Vision},
  year={2014}
}

@inproceedings{krause2013collecting,
  title={Collecting a Large-Scale Dataset of Fine-Grained Cars},
  author={Krause, Jonathan and Stark, Michael and Deng, Jia and Fei-Fei, Li},
  booktitle={ICCV Workshop on Fine-Grained Visual Categorization},
  year={2013}
}

@techreport{griffin2007caltech256,
  title={Caltech-256 Object Category Dataset},
  author={Griffin, Gregory and Holub, Alex and Perona, Pietro},
  institution={California Institute of Technology},
  year={2007}
}

@inproceedings{merity2017pointer,
  title={Pointer Sentinel Mixture Models},
  author={Merity, Stephen and Xiong, Caiming and Bradbury, James and Socher, Richard},
  booktitle={International Conference on Learning Representations},
  year={2017}
}

@misc{gokaslan2019openwebtext,
  title={OpenWebText Corpus},
  author={Gokaslan, Aaron and Cohen, Vanya},
  year={2019},
  note={Open-source reproduction of the WebText corpus}
}

@inproceedings{zhu2015aligning,
  title={Aligning Books and Movies: Towards Story-Like Visual Explanations by Watching Movies and Reading Books},
  author={Zhu, Yukun and Kiros, Ryan and Zemel, Rich and Salakhutdinov, Ruslan and Urtasun, Raquel and Torralba, Antonio and Fidler, Sanja},
  booktitle={Proceedings of the IEEE International Conference on Computer Vision},
  year={2015}
}

@misc{soboleva2023slimpajama,
  title={SlimPajama: A 627B Token Cleaned and Deduplicated Version of RedPajama},
  author={Soboleva, Daria and Al-Khateeb, Faisal and Myers, Robert and Steeves, Jacob and Hestness, Joel and Dey, Nolan},
  year={2023},
  note={Dataset release}
}

@inproceedings{ahn2024ftrl,
 title={Understanding {Adam} Optimizer via Online Learning of Updates: {Adam} is {FTRL} in Disguise},
 author={Ahn, Kwangjun and Zhang, Zhiyu and Kook, Yunbum and Dai, Yan},
 booktitle={International Conference on Machine Learning},year={2024},
 url={https://arxiv.org/abs/2402.01567}}

@inproceedings{nguyen2026beta,
 title={How to Set $\beta_1,\beta_2$ in {Adam}: An Online Learning Perspective},
 author={Nguyen, Quan M.},booktitle={Algorithmic Learning Theory},
 volume={313},pages={1--16},publisher={PMLR},year={2026},
 url={https://proceedings.mlr.press/v313/nguyen26b.html}}

@inproceedings{chandra2022gradient,
 title={Gradient Descent: The Ultimate Optimizer},
 author={Chandra, Kartik and Xie, Audrey and Ragan-Kelley, Jonathan and Meijer, Erik},
 booktitle={Advances in Neural Information Processing Systems},volume={35},year={2022},
 url={https://proceedings.neurips.cc/paper_files/paper/2022/hash/36ce475705c1dc6c50a5956cedff3d01-Abstract-Conference.html}}

@inproceedings{ozkara2024mada,
 title={{MADA}: Meta-Adaptive Optimizers Through Hyper-Gradient Descent},
 author={Ozkara, Kaan and Karakus, Can and Raman, Parameswaran and Hong, Mingyi and Sabach, Shoham and Kveton, Branislav and Cevher, Volkan},
 booktitle={Proceedings of the 41st International Conference on Machine Learning},
 volume={235},pages={38983--39008},publisher={PMLR},year={2024},
 url={https://proceedings.mlr.press/v235/ozkara24a.html}}

@inproceedings{topollai2026adaptive,
 title={Adaptive Memory Momentum via a Model-Based Framework for Deep Learning Optimization},
 author={Topollai, Kristi and Choromanska, Anna Ewa},
 booktitle={Artificial Intelligence and Statistics},volume={300},pages={4060--4068},
 publisher={PMLR},year={2026},url={https://proceedings.mlr.press/v300/topollai26a.html}}

@inproceedings{zhang2019yellowfin,
 title={{YellowFin} and the Art of Momentum Tuning},
 author={Zhang, Jian and Mitliagkas, Ioannis},
 booktitle={Conference on Machine Learning and Systems},year={2019},
 url={https://proceedings.mlsys.org/paper_files/paper/2019/file/b205b525b7ce002baae53228bab6d26b-Paper.pdf}}

@article{ali2026kalman,
 title={{Kalman-Adam}: Optimal Bayesian Moment Estimation for Memory-Efficient and Generalizable Deep Learning},
 author={Ali, Mohsin and Bhatt, Ruby},journal={Knowledge-Based Systems},
 volume={342},pages={115907},year={2026},doi={10.1016/j.knosys.2026.115907}}

@misc{fernandez2026refresh,
 title={Refresh-Scaling the Memory of Balanced {Adam}},
 author={Fern\'andez-Hern\'andez, Alberto and P\'erez-Corral, Cristian and Mestre, Jose I. and Dolz, Manuel F. and Quintana-Ort\'i, Enrique S.},
 year={2026},note={arXiv:2605.10119},url={https://arxiv.org/abs/2605.10119}}
\section*{Reproducibility checklist}
\begin{enumerate}
\item For all models and algorithms presented, check if you include:
\begin{enumerate}
\item A clear description of the mathematical setting, assumptions, algorithm, and/or model. \textbf{Yes}: Sections~\ref{sec:target}--\ref{sec:measurements} and Appendix~\ref{app:reproduction}.
\item An analysis of the properties and complexity (time, space, sample size) of any algorithm. \textbf{Yes}: Section~\ref{sec:cost}.
\item (Optional) Anonymized source code, with specification of all dependencies, including external libraries. \textbf{Yes}.
\end{enumerate}
\item For any theoretical claim, check if you include:
\begin{enumerate}
\item Statements of the full set of assumptions of all theoretical results. \textbf{Yes}: the local assumption and conditions accompanying results.
\item Complete proofs of all theoretical results. \textbf{Yes}: the main text and Appendices~\ref{app:proofs}--\ref{app:scope}.
\item Clear explanations of any assumptions. \textbf{Yes}: alongside the derivations and approximation scope.
\end{enumerate}
\balance
\item For all figures and tables that present empirical results, check if you include:
\begin{enumerate}
\item The code, data, and instructions needed to reproduce the main experimental results (either in the supplemental material or as a URL). \textbf{Yes} for the selection evaluation. Appendices~\ref{app:measurement}--\ref{app:reproduction} specify the measurement procedures, training recipes, and evaluation calculations.
\item All the training details (e.g., data splits, hyperparameters, how they were chosen). \textbf{Yes} for the documented experimental protocol: Appendix~\ref{app:reproduction} specifies architectures, activations, optimizer settings, batches, schedules, splits, seed matching, and evaluation. It separately identifies archive-specific corpus manifests and environment revisions needed for identical raw-stream replay.
\item A clear definition of the specific measure or statistics and error bars (e.g., with respect to the random seed after running experiments multiple times). \textbf{Yes}: descriptive cross-workload metrics, seed matching, and common-seed training outcomes are specified.
\item A description of the computing infrastructure used. (e.g., type of GPUs, internal cluster, or cloud provider). \textbf{Yes}: a cluster with eight NVIDIA A100-SXM4 80\,GB GPUs; CUDA and pilot bfloat16 execution are described in Appendix~\ref{app:reproduction}.
\end{enumerate}
\item If you are using existing assets (e.g., code, data, models) or curating/releasing new assets, check if you include:
\begin{enumerate}
\item Citations of the creator If your work uses existing assets. \textbf{Yes}: model and dataset sources are cited.
\item The license information of the assets, if applicable. \textbf{Yes}: Appendix~\ref{app:assets} records software/checkpoint licenses and dataset-provider access sources, identifying the documented reuse provenance.
\item New assets either in the supplemental material or as a URL, if applicable. \textbf{Yes}: measured sketches and validation-loss summaries.
\item Information about consent from data providers/curators. \textbf{N/A}: the study uses existing benchmark assets under their providers' access terms.
\item Discussion of sensible content if applicable, e.g., personally identifiable information or offensive content. \textbf{Yes}: Appendix~\ref{app:reproduction} discusses inherited corpus content and the scope of the supplied data.
\end{enumerate}
\item If you used crowdsourcing or conducted research with human subjects, check if you include:
\begin{enumerate}
\item The full text of instructions given to participants and screenshots. \textbf{N/A}: existing benchmark evaluation.
\item Descriptions of potential participant risks, with links to Institutional Review Board (IRB) approvals if applicable. \textbf{N/A}: existing benchmark evaluation.
\item The estimated hourly wage paid to participants and the total amount spent on participant compensation. \textbf{N/A}: existing benchmark evaluation.
\end{enumerate}
\end{enumerate}

\clearpage
\nobalance
\appendix
\onecolumn
\raggedbottom
\setlength{\textfloatsep}{12pt plus 2pt minus 2pt}
\setlength{\intextsep}{12pt plus 2pt minus 2pt}
\setlength{\floatsep}{12pt plus 2pt minus 2pt}
\setlength{\tabcolsep}{4pt}
\setlength{\parskip}{3pt plus 1pt}
\aistatstitle{Early Memory Selection for Balanced Adam: Supplementary materials}

\section{Detailed derivations}\label{app:proofs}
This appendix derives the averaging formulas step by step. Each calculation concerns the local model stated in the main text and expresses the selection mechanism through explicit statistical quantities.
\subsection{Obtaining the balanced first-order error}
The first derivation shows why shared weights let the two moment channels become one direction innovation and one delay term.
Bias correction normalizes the exponential weights to sum to one. The first-moment average in Assumption~\ref{ass:local} differs from its current mean by
\[
 \sum_kw_{\beta,t}(k)(-k\dot\mu+\xi_k)
 =-a_{\beta,t}\dot\mu+\sum_kw_{\beta,t}(k)\xi_k.
\]
The second-moment average similarly differs by $-a_{\beta,t}\dot q+\sum_kw_{\beta,t}(k)\chi_k$. Applying~\eqref{eq:derivatives} gives
\[
 e_{\beta,t}=-a_{\beta,t}(A\dot\mu+B\dot q)
             +\sum_kw_{\beta,t}(k)(A\xi_k+B\chi_k).
\]
This is exact for the first-order model. The innovation sum is centered; independence across ages yields variance $c_{\beta,t}V_{\rm pop}$. Its squared mean is $a_{\beta,t}^2D_{\rm pop}$, proving~\eqref{eq:finite}. The assumptions pair independent ages with dependent moment channels at each age.

\subsection{Finite-history weight formulas}
The retained noise and the delay depend on two sums of weights. Computing them explicitly makes the finite-history risk a scalar function of $\beta$ and history length.
For $0<\beta<1$, geometric summation gives
\[
 c_{\beta,t}=\frac{(1-\beta)^2}{(1-\beta^t)^2}\frac{1-\beta^{2t}}{1-\beta^2}
 =\frac{1-\beta}{1+\beta}\frac{1+\beta^t}{1-\beta^t}.
\]
Moreover, $\sum_{k=0}^{t-1}k\beta^k=\beta\frac{d}{d\beta}[(1-\beta^t)/(1-\beta)]$. Multiplication by the weight normalizer yields $a_{\beta,t}=\beta/(1-\beta)-t\beta^t/(1-\beta^t)$. At $\beta$ zero only the current observation has weight. At $t=1$, $a_{\beta,1}=0$ and $c_{\beta,1}=1$ for every $\beta$. Thus the first Adam update is independent of both memories with zero initial moments and standard bias correction.

\subsection{Stationary and leading minimizers}
There are two simplifications: removing finite-history correction when $\beta^t$ is small, then expanding for large $H$. This subsection derives each separately and states the boundary cases.
For fixed $\beta<1$, $\beta^t\to0$, so $a_{\beta,t}\to H-1$ and $c_{\beta,t}\to(2H-1)^{-1}$. For positive coefficients,
\[
 f'(H)=-\frac{2V_{\rm pop}}{(2H-1)^2}+2D_{\rm pop}(H-1),\qquad
 f''(H)=\frac{8V_{\rm pop}}{(2H-1)^3}+2D_{\rm pop}>0.
\]
The unique minimum solves~\eqref{eq:exactroot}. Replacing $2H-1$ by $2H$ and $H-1$ by $H$ gives the large-memory leading expression, whose stationary point is $H^3=V_{\rm pop}/(4D_{\rm pop})$. If that leading minimum is below one, its constrained minimum is at one. If $D_{\rm pop}=0<V_{\rm pop}$, longer memory reduces stationary-model risk; if $V_{\rm pop}=0<D_{\rm pop}$, $H=1$ is optimal. If both are zero, every memory has zero model risk.

\subsection{Sensitivity}
Comparing an estimated memory with the population-model minimizer shows how coefficient errors propagate through the cube root.
The logarithmic ratio of memory estimates is
\[
 \log\frac{\widetilde H}{H_\star}
 =\frac13\left(\log\frac{\widetilde V}{V_{\rm pop}}-\log\frac{\widetilde D}{D_{\rm pop}}\right).
\]
The triangle inequality proves the bound. At the optimum the variance term is twice the lag term, so division of risks gives $(2/z+z^2)/3$. With $z=2^{1/3}$ this is approximately 1.0583. For the worked coefficients $0.18$ and $4\times10^{-6}$, $H_\star=22.407$ and $\beta_\star=0.95537$. The memory value follows directly from the statistical coefficients.

\subsection{The two-memory expression}
The two-memory expression explains why sharing weights gives a particularly simple selection problem: the two moment channels combine into one innovation and one lag coefficient.
Retain weights $w_1=w_{\beta_1,t}$, $w_2=w_{\beta_2,t}$. Let $a_i=\sum_kkw_i(k)$, $c_i=\sum_kw_i(k)^2$, and $c_{12}=\sum_kw_1(k)w_2(k)$. Their linearized direction error is $e_{\beta_1,\beta_2,t}=\sum_k[w_1(k)A\xi_k+w_2(k)B\chi_k]-a_1A\dot\mu-a_2B\dot q$. Under the same assumptions,
\begin{align*}
 \E\norm{e_{\beta_1,\beta_2,t}}^2
 ={}&c_1\E\norm{A\xi_k}^2+c_2\E\norm{B\chi_k}^2
 +2c_{12}\E\langle A\xi_k,B\chi_k\rangle\\
 &+\norm{a_1A\dot\mu+a_2B\dot q}^2.
\end{align*}
Centering removes lag--innovation cross terms; independence removes cross-age terms; expansion retains same-age covariance. Equal memories recover~\eqref{eq:finite}, combining the channels into two scalar coefficients. This reduction provides an analytical basis for selecting a memory from the shared-memory family.

\section{Scope and approximation}\label{app:scope}
The tracking identity is exact under its first-order model. This section explains the additional quantities required to connect that identity to nonlinear updates and a smooth loss.
\subsection{Residual terms}
The Taylor approximation and violations of the local assumptions can be collected into a direction residual. Its size controls how much the model's risk can differ from the actual direction-error risk.
Write the actual direction error as $e_{\beta,t}+R_{\beta,t}$, where $e_{\beta,t}$ is the theorem's linearized error and $R_{\beta,t}$ is their difference. Let $\rho$ be any nonnegative bound on its root-mean-square size: $\E\norm{R_{\beta,t}}^2\le\rho^2$. Expanding the actual squared norm gives
\[
 \E\norm{e_{\beta,t}+R_{\beta,t}}^2-\E\norm{e_{\beta,t}}^2
 =2\E\langle e_{\beta,t},R_{\beta,t}\rangle+\E\norm{R_{\beta,t}}^2.
\]
The absolute inner product is bounded by the product of root-mean-square norms, by Cauchy--Schwarz. Therefore
\[
 \left|\E\norm{e_{\beta,t}+R_{\beta,t}}^2-\E\norm{e_{\beta,t}}^2\right|
 \le2\rho\sqrt{\E\norm{e_{\beta,t}}^2}+\rho^2.
\]
The residual represents nonlinear normalization and deviations from the local statistical model. The inequality quantifies how a stated residual bound transfers the model risk to the actual direction-error risk. Applying it along an adaptive trajectory requires a bound on the residual at the corresponding updates. The workload measurements audit the practical proxies and selected configurations.

\subsection{Comparing memories at finite measurement resolution}\label{app:resolution}
Proposition~\ref{prop:resolution} quantifies the allowance introduced by estimating a slope from noisy endpoints. This subsection measures its effect on the memory decision under the same local model.
Fix a finite candidate set $\mathcal B\subset[0,1)$ and the update $t$. Define $R_t(\beta)=\E\norm{Pe_{\beta,t}}^2$, and let $\beta_R$ minimize $R_t$ and $\beta_U$ minimize $U_{L,t}$ on $\mathcal B$. The coefficients in both functions are the population and expected-measurement coefficients specified in the proposition. Then
\[
 0\le R_t(\beta_U)-R_t(\beta_R)
 \le\frac{C}{L^2}\bigl(a_{\beta_R,t}^2-a_{\beta_U,t}^2\bigr)
 \le\frac{C}{L^2}a_{\beta_R,t}^2.
\]
To prove this, start with $U_{L,t}(\beta_U)\le U_{L,t}(\beta_R)$ and substitute~\eqref{eq:resolution} on both sides. Rearranging gives the middle inequality; optimality of $\beta_R$ gives the lower bound. Since mean ages are nonnegative, discarding the subtracted age term gives the last bound. The result compares two memory decisions through their local tracking risk and makes the cost of measurement resolution explicit.

Finite coefficient accuracy can be incorporated in the same comparison. Suppose a measured score $\widehat U_{L,t}$ satisfies $|\widehat U_{L,t}(\beta)-U_{L,t}(\beta)|\le\delta$ simultaneously for every $\beta\in\mathcal B$, and let $\widehat\beta$ minimize this measured score. Inserting the two uniform error bounds into $\widehat U_{L,t}(\widehat\beta)\le\widehat U_{L,t}(\beta_U)$ yields $U_{L,t}(\widehat\beta)\le U_{L,t}(\beta_U)+2\delta$. Hence
\[
 R_t(\widehat\beta)-R_t(\beta_R)
 \le\frac{C}{L^2}a_{\beta_R,t}^2+2\delta.
\]
Here $\delta$ is an assumed uniform coefficient-score accuracy, and the bound is conditional on it. For example, bounds $|\widehat V_{\rm emp}-V_P|\le e_V$ and $|\widehat D_{\rm emp}-D_L|\le e_D$ give $\delta=\max_{\beta\in\mathcal B}(e_Vc_{\beta,t}+e_Da_{\beta,t}^2)$. For a cubic-and-rounded candidate $b$, define its measured optimization discrepancy as $\widehat\eta=\widehat U_{L,t}(b)-\min_{\beta\in\mathcal B}\widehat U_{L,t}(\beta)$. The same argument gives the bound $R_t(b)-R_t(\beta_R)\le(C/L^2)a_{\beta_R,t}^2+2\delta+\widehat\eta$: insert this discrepancy before applying the two uniform error bounds. Thus coefficient accuracy and optimization accuracy have separate roles. The exact finite-history audit finds $\widehat\eta=0$ for all eleven observed choices at $t=200$. Timing and decision-margin audits describe their additional empirical sensitivity.

\subsection{A local connection to the loss}
Tracking error matters because it perturbs a reference update. The following one-step inequality separates a systematic directional shift from the curvature cost of fluctuations.
If $F$ has an $L$-Lipschitz gradient, $\theta,r$ are fixed, and $u=r+e$ has finite second moment, the smoothness inequality at $\theta-\eta r$ gives
\begin{align*}
 \E F(\theta-\eta u)
 &\le F(\theta-\eta r)-\eta\langle\nabla F(\theta-\eta r),\E e\rangle+\frac{L\eta^2}{2}\E\norm e^2\\
 &\le F(\theta-\eta r)+\eta\norm{\nabla F(\theta-\eta r)}\norm{\E e}+\frac{L\eta^2}{2}\E\norm e^2.
\end{align*}
The term involving $\E e$ measures a systematic shift from the reference direction; the term involving $\E\norm e^2$ is the curvature penalty. This one-step inequality connects direction-error moments to the loss under the stated smoothness assumption. Its parameter-space norm makes a norm-transfer bound necessary when the error is measured through a sketch. The empirical comparisons evaluate the selected configurations through their full-training validation losses.

\subsection{Numerical checks of the identities}
These checks test the algebra and the specified innovation model, independently of the saved neural-network outcomes.
With NumPy seed 20261005, 1,000 random-vector cases verify~\eqref{eq:pool} with maximum absolute residual $7.11\times10^{-15}$, and 1,000 positive-coefficient cases verify~\eqref{eq:sensitivity} with residual $1.33\times10^{-15}$. An affine-model check uses 30,000 independent histories of length 128, $\beta$ 0.95, innovation standard deviation 0.3, and slope 0.002:
\[
 e=\sum_{k=0}^{127}w_{0.95,128}(k)\xi_k-0.002\sum_{k=0}^{127}kw_{0.95,128}(k),\qquad \xi_k\sim\mathcal N(0,0.3^2).
\]
Its measured mean squared error is 0.00373524, versus exact 0.00373090; the Monte Carlo standard error is $2.84\times10^{-5}$. These checks test the statistical formulas, separately from network training.

\section{Measurement reproducibility}\label{app:measurement}
To reproduce a selection, one needs four direction measurements and the fixed geometry used to compare them. The following specification covers gradient processing, state preservation, projection, and scalar calculation in that order.
\subsection{Gradient samples and probe state}
The pilot seed is 1 and shared $\beta$ 0.95. Models remain in training mode during probes, including dropout and augmentation. For accumulation count $a$, each effective-batch gradient averages $a$ physical-batch losses. Norm clipping follows accumulation. Coupled Adam decay is added after clipping; AdamW's decoupled decay belongs to the parameter update. Sixteen fresh effective batches are drawn per required checkpoint with parameters held fixed. Partition their ordered samples into four groups of four; each group computes its own first and second moments before normalization. The pooled direction uses all sixteen samples. Thus the dispersion calculation compares four group directions, each formed from four samples, with one pooled direction.

Python, NumPy, CPU/CUDA random states, module training flags, and mutable buffers are restored after probing, preserving the pilot's training state. The dedicated probe iterator and generator advance through the sampled examples. Gradients are cleared afterward. Dense archives contain checkpoints through update 500 and additional probe budgets; the rule replays checkpoints 100, 150, 180, and 200 with sixteen samples each. Matching the original probe examples in a sparse collection requires reproducing the dense stream's advancement. The supplied measured sketches provide exact reconstruction of the reported decisions. The 64-gradient operation count describes a prospective compact four-checkpoint deployment; the experiments reconstruct its decision from the dense archive's observations.

Pilot vision loaders use zero worker processes and a separate probe generator seeded by training seed plus 10000019. GPT pilot training and probe generators use seed plus 101 and seed plus 10000101, respectively. Llama's training shuffle uses seed 42 and the probe shuffle uses seed plus 10000003.

\subsection{The exact fixed sketch}
The sketch makes measurements storable while giving each parameter tensor comparable weight. It defines a tensor-balanced measurement norm. For example, multiplying one tensor's dimension by four reduces each of its coordinate weights by two. The same fixed linear map applies to the theorem's innovations and slope, giving the tracking identity in this measurement geometry. Transferring a sketch-space error bound to the original parameter-space norm requires an accompanying norm-transfer bound.
Use dimension 1,024 and seed 12,345. For trainable tensor $p$ with $n_p$ entries, assign coordinate $i$ to bucket $b_{p,i}$ and sign $s_{p,i}\in\{-1,1\}$. The projection is
\[
 (Pr)_b=\sum_p\frac1{\sqrt{n_p}}\sum_{i:b_{p,i}=b}s_{p,i}r_{p,i}.
\]
To obtain each tensor's bucket/sign generator seed, join the decimal projection seed, parameter name, and suffix \texttt{idx} or \texttt{sgn} with \texttt{|}. Hash its UTF-8 bytes with BLAKE2b, eight-byte digest; interpret little-endian unsigned and retain the low 31 bits. Seed independent Torch CPU generators for bucket and sign draws. Buckets are uniform integers 0--1023; signs are generated by uniform integers 0 or 1 mapped to $-1$ or $+1$. Flatten tensors and accumulate in float64. Names are the model's named trainable parameters. Global sketches sum tensor sketches and supply the selection inputs. The archive also contains layerwise 32-coordinate diagnostic sketches.

\subsection{Statistical basis of the coefficient estimates}\label{app:estimatorproofs}
The main estimators use two standard operations: variance estimation from independent groups and linear regression on the lag responses. This subsection supplies the calculations and states precisely where their unbiasedness applies.

\paragraph{Grouped variability.}
At a fixed checkpoint, let $Z_i=P\zeta_i$ be independent, centered linearized direction innovations with $\E\norm{Z_i}^2=V_P$. Each of $G$ disjoint groups contains $m/G$ observations. Its innovation average is $Z^{(j)}=(G/m)\sum_{i\text{ in group }j}Z_i$, and their common mean is $\bar Z=G^{-1}\sum_jZ^{(j)}$. Independence and centering give
\[
 \E\norm{Z^{(j)}}^2=\frac{G}{m}V_P,\qquad
 \E\norm{\bar Z}^2=\frac{1}{m}V_P.
\]
The identity $G^{-1}\sum_j\norm{Z^{(j)}-\bar Z}^2=G^{-1}\sum_j\norm{Z^{(j)}}^2-\norm{\bar Z}^2$ therefore yields
\[
 \E\left[\frac1G\sum_j\norm{Z^{(j)}-\bar Z}^2\right]
 =\frac{G-1}{m}V_P.
\]
In the linearized model, subtracting the common population direction from pooled and group estimates leaves these innovation averages. Equation~\eqref{eq:empV} thus estimates $V_P$ without bias under this model. For nonlinear normalized moment estimates, Proposition~\ref{prop:pool} still holds exactly, but the group vectors also contain higher-order terms. Its pooling displacement describes one observable contribution; the residual analysis in Appendix~\ref{app:scope} addresses the additional approximation error.

\paragraph{Temporal regression.}
Condition on the pilot parameter trajectory in the observation model of Section~\ref{sec:rule}. The lags and projected population directions are then fixed. Write $Y=(y_{n_1},\ldots,y_{n_K})^\top$ and let $X$ be the $K$-by-two matrix whose row $\ell$ is $(1,n_\ell^{-2})$. Equation~\eqref{eq:endpoint} gives $Y=X(D_P,C)^\top+\nu$, where $\nu$ is the regression residual vector and $\E\nu=0$. Distinct lags make the two columns independent. The ordinary least-squares coefficients satisfy
\[
 \begin{pmatrix}I_{\rm fit}\\C_{\rm fit}\end{pmatrix}
 =(X^\top X)^{-1}X^\top Y
 =\begin{pmatrix}D_P\\C\end{pmatrix}
 +(X^\top X)^{-1}X^\top\nu.
\]
Taking expectations proves unbiasedness of the unconstrained coefficients. The errors in $Y$ may be correlated and have different variances; these features affect uncertainty and efficiency, while the expectation calculation only uses their zero means. The implementation's least-squares choice matches the affine mean relation and requires no distributional likelihood assumption. Componentwise clipping subsequently enforces the nonnegative score coefficients and can alter their sampling expectations, particularly near zero.

\subsection{Coefficient calculation and conversion}
The calculation uses fresh probe moments: pooled directions for checkpoint changes and individual group directions for fluctuations at a fixed checkpoint.
For $m=16,G=4$, $\widehat V_{\rm emp}=(4/3)\sum_{j=1}^4\norm{z_{200}^{(j)}-z_{200}}^2$. Each group normalizes moments from four effective-batch gradients; the pooled direction uses moments from all sixteen. The archived collector also uses a square-root floor $\tau=10^{-30}$ in auxiliary derivative calculations. Its guarded direction is numerically identical to~\eqref{eq:probe} in the collector's float64 arithmetic with $\eps=10^{-8}$: the floor changes a denominator only when $\sqrt{\widehat q}<\tau$, and both sums with epsilon then round to epsilon. For exact arithmetic the coordinatewise difference is at most $\tau^2/(4\eps^2)=2.5\times10^{-45}$, since $|\widehat\mu|\le\sqrt{\widehat q}$. Thus the direction estimator can be written in the standard Adam form without changing the archived selection inputs.

Build a three-by-two least-squares design with rows $(1,n^{-2})$ for $n=20,50,100$, and response $y_n$. Solve for $(I,C)$, apply componentwise clipping $\widehat I=\max(0,I)$ and $\widehat C=\max(0,C)$, and evaluate at lag 100. The algorithm therefore consists of one ordinary least-squares fit followed by clipping. Raw checkpoint sketches supply the lag observations directly. The resulting $\widehat D_{\rm emp}$ is the fitted nonnegative change on the stated observation scale, including its finite-lag contribution.

For positive coefficients, calculate the cubic memory in~\eqref{eq:rule}. The admissible interval is $H_{\min}=1/(1-0.68377)$ through $H_{\max}=1/(1-0.999)$; restrict memory to this interval and convert it to $\beta$. Equivalently, this minimizes the operational leading score over the continuous interval before nearest-$\beta$ rounding. If $\widehat D_{\rm emp}=0<\widehat V_{\rm emp}$, the score decreases with memory, so use $H_{\max}$. If $\widehat V_{\rm emp}=0<\widehat D_{\rm emp}$, use $H_{\min}$. If both vanish, all scores tie and the smaller $\beta$ resolves the tie. These cases follow from minimizing the stated objective. All main measurements have positive coefficients. Rounding is in $\beta$ distance, with ties toward the smaller grid value.

Appendix~\ref{app:estimatorproofs} derives the variance correction; Proposition~\ref{prop:pool} separates group scatter and nonlinear pooling displacement.

\section{Training reproducibility}\label{app:reproduction}
The selected configurations are evaluated using fixed-memory training runs. This appendix specifies the common protocol, then the architecture, data processing, and validation for each family. Offline reconstruction from the supplied measurements is described separately from regenerating the original training streams.
\subsection{Shared settings and schedules}
All sweep runs use equal $\beta$ values and epsilon $10^{-8}$ outside the root. For each workload, candidates share batch size, architecture, data, learning-rate recipe, and evaluation schedule. The main comparison uses seed 1. Initialization seeds Python, NumPy, Torch CPU, and all CUDA devices; cuDNN uses deterministic mode with benchmarking set to \texttt{False}. Models use random initialization, except T5-small, which uses its pretrained checkpoint. Pilots use CUDA and bfloat16 autocast; fused optimizers are used when available.

Full-training budgets are 10,000 updates for language models and T5; 50 epochs for Food-101 and both ViTs; 80 epochs for ImageNet100; 100 epochs for Cars; 60 epochs for Caltech-256. Vision full-training schedule horizons equal epochs times training batches per epoch. Their rates warm up linearly, then decay as
\[
 \eta_s=\eta_{\min}+\frac{\eta_{\max}-\eta_{\min}}2\left[1+\cos\left(\pi\frac{s-W}{T-W}\right)\right],\qquad s>W,
\]
with $\eta_s=\eta_{\max}s/W$ during warmup. ResNet warmup is 5\% of the horizon; ViT uses 1,000 updates; EfficientNet and Swin use the larger of 100 updates and 5\% of the horizon. Language warmups are 1,000 for Llama and T5, and 2,000 for GPT-2-style models.

Pilot schedule horizons are explicitly configured as listed in Table~\ref{tab:pilot}, including ImageNet100 and Caltech-256. Pilot warmup lengths are 1,000 for Llama and T5; 2,000 for GPT-2-style models; 1,477 for Food; 4,060 for ImageNet100; 1,000 for ViTs; 640 for Cars; and 1,128 for Swin. Thus all four compact measurements occur during warmup.
\begin{table}[ht]\centering
\caption{Pilot schedule and optimizer settings. Steps is the configured schedule horizon; batch is physical batch size. Accumulation defines the effective batch. Clip zero denotes an unmodified gradient norm.}
\label{tab:pilot}\begin{tabular*}{\textwidth}{@{\extracolsep{\fill}}llrrrrrr@{}}
\toprule Network & Dataset & Steps & Batch & Accum. & LR max & Decay & Clip\\\midrule
EfficientNet-B0 & Cars & 12800 & 64 & 1 & 0.0008 & 5e-05 & 1\\
Llama60M & C4 & 10000 & 64 & 8 & 0.001 & 1e-05 & 0\\
Llama60M & SlimPajama & 10000 & 64 & 8 & 0.001 & 1e-05 & 0\\
NanoGPT & OpenWebText & 10000 & 8 & 128 & 0.0006 & 0.01 & 1\\
NanoGPT & WikiText-103 & 10000 & 8 & 128 & 0.0006 & 0.01 & 1\\
ResNet50 & Food-101 & 29550 & 128 & 1 & 0.0003 & 0.01 & 0\\
ResNet50 & ImageNet100 & 81200 & 128 & 1 & 0.0003 & 0.01 & 0\\
Swin-T & Caltech-256 & 22560 & 64 & 1 & 0.0004 & 0.05 & 1\\
T5-small & BookCorpus & 10000 & 8 & 8 & 0.0001 & 0.01 & 1\\
ViT-B/16 & CIFAR-100 & 19500 & 128 & 1 & 0.0005 & 0.1 & 1\\
ViT-B/16 & TinyImageNet & 39050 & 128 & 1 & 0.0005 & 0.1 & 1\\
\bottomrule\end{tabular*}

\end{table}

AdamW excludes one-dimensional parameters, biases, and normalization parameters from decay according to the recipe: batch-normalization names for ResNet; normalization names for GPT, ViT, Swin, and T5. Llama uses coupled Adam decay on all parameters; EfficientNet also uses coupled Adam decay. Clipping is on the full accumulated gradient norm before the optimizer update.

\subsection{Causal language modeling}
\paragraph{GPT-2-style models.}
Both have six decoder blocks, width 384, six attention heads, feedforward width 1,536, vocabulary 50,304, and context 256. They use learned position embeddings, tied input/output embeddings, layer normalization epsilon $10^{-5}$, GPT-2 approximate GELU, initialization standard deviation 0.02, and dropout 0.2 in embeddings, attention, and residuals. Each effective batch averages 128 physical batches of eight sequences (262,144 input tokens). AdamW uses maximum rate $6\times10^{-4}$, minimum $6\times10^{-6}$, decay 0.01, and clip 1.

WikiText-103 uses its raw training/validation splits and GPT-2 tokenization \citep{merity2017pointer}. OpenWebText uses separately packed training/validation streams \citep{gokaslan2019openwebtext}. Batches sample uniform contiguous start positions, giving length-256 inputs and one-token-shifted targets. Evaluation every 500 updates averages next-token cross-entropy over 50 validation batches. Its validation generator uses seed 123456789; train-set evaluation uses 987654321. The packed OpenWebText streams define the actual split; their corpus revision and packing manifest are required to match the exact archived samples.

\paragraph{Llama-style models.}
Both have eight decoder blocks, width 512, feedforward width 1,376, eight attention heads, SiLU-gated feedforward layers, rotary positional encoding, RMS normalization epsilon $10^{-6}$, and initialization standard deviation 0.02 \citep{touvron2023llama}. The configured vocabulary begins at 32,000 and is expanded for the T5-base tokenizer when required. Sequences have length 256; eight physical batches of 64 sequences form each effective batch (131,072 token positions). Adam uses maximum rate $10^{-3}$, minimum $10^{-4}$, coupled decay $10^{-5}$, and gradients with their original norms.

C4 uses the English configuration \citep{raffel2020exploring}; the second task uses the cached DKYoon/SlimPajama-6B subset \citep{soboleva2023slimpajama}. Texts are shuffled with seed 42, tokenized with T5-base, truncated/padded to 256, and evaluated with valid-token masking. Validation uses batch size 64 and a ten-million-valid-token target. Evaluation is recorded at update 1 and every 1,000 updates, aggregating losses by valid target count.

\subsection{Image classification models}
\paragraph{ResNet50.}
Bottleneck stages have depths $(3,4,6,3)$ and output widths $(256,512,1024,2048)$, using ReLU, batch normalization, a $7\times7$ stride-2 stem and max pooling, and global average pooling \citep{he2016deep}. Dropout 0.2 precedes the linear 101-class Food or 100-class ImageNet head. AdamW uses maximum rate $3\times10^{-4}$, minimum $3\times10^{-5}$, decay 0.01, gradients with their original norms, and batch 128.

\paragraph{ViT-B/16.}
Inputs are $224\times224$ RGB images with $16\times16$ patches. The encoder has 12 blocks, width 768, 12 heads, feedforward width 3,072, GELU, layer-normalization epsilon $10^{-6}$, and the standard torchvision zero encoder/attention-dropout defaults \citep{dosovitskiy2021image}. Heads have 100 CIFAR or 200 TinyImageNet outputs. AdamW uses maximum rate $5\times10^{-4}$, minimum $5\times10^{-6}$, decay 0.1, clip 1, and batch 128.

\paragraph{Swin-T.}
Patch size is 4, window size 7, base width 96, stage depths $(2,2,6,2)$, head counts $(3,6,12,24)$, and feedforward expansion 4 \citep{liu2021swin}. Shifted-window attention, GELU, layer normalization, and stochastic depth 0.2 are used, with a 256-class head. AdamW uses maximum rate $4\times10^{-4}$, minimum $4\times10^{-6}$, decay 0.05, clip 1, and batch 64.

\paragraph{EfficientNet-B0.}
The 32-channel stem is followed by seven MBConv stages. Their (expansion, kernel, first stride, output channels, repetitions) are $(1,3,1,16,1)$, $(6,3,2,24,2)$, $(6,5,2,40,2)$, $(6,3,2,80,3)$, $(6,5,1,112,3)$, $(6,5,2,192,4)$, and $(6,3,1,320,1)$ \citep{tan2019efficientnet}. Squeeze-and-excitation, SiLU, stochastic depth 0.2, a final width of 1,280, global pooling, dropout 0.2, and a 196-class head complete the model. Adam uses maximum rate $8\times10^{-4}$, minimum $10^{-5}$, coupled decay $5\times10^{-5}$, clip 1, and batch 64.

\subsection{Vision datasets, preprocessing, and evaluation}
Food-101 uses 75,750 training and 25,250 validation images \citep{bossard2014food}. Stanford Cars uses 8,144/8,041 images, with its test partition used for validation \citep{krause2013collecting}. CIFAR-100 uses 50,000/10,000 images \citep{krizhevsky2009learning}. TinyImageNet uses 100,000 training and 10,000 validation images across 200 classes. ImageNet100 is a 100-class subset \citep{deng2009imagenet}; the pilot loads 126,689 training images and remaps labels consistently across four training partitions and validation. Exact subset membership is determined by its synset manifest.

Caltech-256 excludes clutter \citep{griffin2007caltech256}. Sort category names and image paths lexicographically, create one Python random generator with seed 123, visit classes in order and shuffle each class's paths, then take the first 60 images for training and the remainder for validation. The generator continues across classes. This gives 15,360 training images.

All inputs are converted to RGB. ResNet uses random resized crop 224, area scale $(0.6,1)$, aspect ratio $(3/4,4/3)$, horizontal flip probability 0.5, and brightness/contrast/saturation/hue jitter $(0.2,0.2,0.2,0.1)$. ViT and Swin use crop area scale $(0.5,1)$, the same ratio/flip, RandAugment with two operations and magnitude 12, and random erasing probability 0.25, area $(0.02,0.2)$, ratio $(0.3,3.3)$, random fill.

EfficientNet uses crop area $(0.7,1)$, ratio $(0.85,1.15)$, flip 0.5, color jitter $(0.2,0.2,0.15,0.03)$, RandAugment with one operation and magnitude 7, and erasing probability 0.1, area $(0.02,0.12)$, ratio $(0.5,2)$, random fill. Erasing follows tensor conversion and normalization. Evaluation resizes the shorter side to 256 and center crops to 224.

Normalization uses means $(0.485,0.456,0.406)$ and standard deviations $(0.229,0.224,0.225)$; CIFAR uses $(0.5071,0.4867,0.4408)$ and $(0.2675,0.2565,0.2761)$. Training uses cross-entropy with label smoothing 0.1, except EfficientNet with 0.05. \emph{Validation uses plain cross-entropy with label smoothing set to zero.} Validation averages batch-mean losses, weighting the final partial batch equally at batch level. Evaluation occurs every epoch. Training shuffles and drops partial batches except EfficientNet; validation preserves dataset order and includes every batch.

\subsection{T5 denoising}
T5-small begins from the pretrained checkpoint and tokenizer \citep{raffel2020exploring}. Both encoder and decoder have six blocks, width 512, feedforward width 2,048, eight heads with key/value width 64, ReLU, normalization epsilon $10^{-6}$, and dropout 0.1. BookCorpus \citep{zhu2015aligning} uses span corruption with maximum input length 256, noise density 0.15, mean span length 3, and sentinel tokens. Padded targets are masked with $-100$.

Corruption is seeded by training seed plus example index; validation adds 10,000,000. Full training reserves the first 20,000 corpus records for validation and uses the remaining 73,984,228 for training. Here a record is one indexed text row. The pilot reserves the first 1,000 records. These are distinct pilot/full partitions. Eight physical batches of eight form each effective batch. AdamW uses maximum rate $10^{-4}$, minimum $10^{-5}$, decay 0.01, and clip 1. The 10,000-update training evaluates every 1,000 updates with batch size eight and valid-target-token weighting. The ten-workload comparison excluding T5 gives mean gaps 0.6850\% for fixed $\beta$ and 0.4060\% for the rule.

\subsection{Metrics and implementation provenance}
Take each run's minimum recorded validation loss. For the primary table, the smallest of the eleven seed-1 grid losses defines~\eqref{eq:gap}'s reference. For the common-seed comparison, first take each run's minimum and then average those minima over identical seed sets.

The implementation uses PyTorch, torchvision, Transformers, NumPy, and dataset-loading utilities. Training was executed on a cluster equipped with eight NVIDIA A100-SXM4 GPUs with 80\,GB memory each; this identifies available cluster hardware. The retained execution records identify CUDA and pilot bfloat16 autocast. Exact regeneration of the original training streams additionally requires pinned package versions, the ImageNet100 synset list, packed OpenWebText split manifest, and corpus/checkpoint revisions. The training recipes and matched-seed evaluation are specified here; those additional identifiers determine the exact environment and sample streams.

\subsection{Reconstructing the selection evaluation}
For each workload, use the pooled and group directions at its four measurement checkpoints. Calculate $\widehat V_{\rm emp}$ by~\eqref{eq:empV}, fit the three squared secants by~\eqref{eq:lagfit}, calculate $\widehat D_{\rm emp}$ by~\eqref{eq:empD}, and convert their ratio to a grid $\beta$ by Section~\ref{sec:measurements}. Then evaluate that $\beta$'s recorded validation loss and compute its relative gap by~\eqref{eq:gap}. Take the arithmetic mean of the eleven gaps, their maximum, and the mean of their three largest values. Applying the identical aggregation to a fixed $\beta$ gives the comparator. The resulting mean gaps are 0.6235665\% for $\beta$ 0.94377 and 0.3694690\% for the rule. Timing and probe-budget analyses repeat these steps with their stated measurement designs, holding the candidate trainings fixed.

\paragraph{Existing corpus content.}
The language tasks use existing text corpora, which can contain offensive text and personally identifying information inherited from their sources. The study uses these benchmark assets for aggregate training and validation measurements. Released measurements consist of gradient-direction sketches and aggregated validation losses. Use of the original datasets and checkpoints remains subject to their providers' access conditions and licenses.

\subsection{Existing assets and access conditions}\label{app:assets}
Software and checkpoint licenses and dataset-provider terms govern access to the original assets. Table~\ref{tab:assets} identifies their providers and documented license information. Released measurements consist of gradient-direction sketches and aggregated losses.
\begin{table}[ht]\centering
\caption{Software and model license records, and dataset access sources. The providers' records govern access and reuse of the original assets.}
\label{tab:assets}
\begin{tabular*}{\textwidth}{@{\extracolsep{\fill}}p{0.22\textwidth}p{0.69\textwidth}@{}}
\toprule Asset & License information or provider record\\\midrule
PyTorch / torchvision & BSD-style / BSD-3-Clause: \href{https://github.com/pytorch/pytorch/blob/main/LICENSE}{PyTorch license} and \href{https://github.com/pytorch/vision/blob/main/LICENSE}{torchvision license}.\\
NumPy / Transformers & BSD-3-Clause / Apache-2.0: \href{https://github.com/numpy/numpy/blob/main/LICENSE.txt}{NumPy license} and \href{https://github.com/huggingface/transformers/blob/main/LICENSE}{Transformers license}.\\
T5-small checkpoint & Apache-2.0, as stated in the \href{https://huggingface.co/google-t5/t5-small}{provider model card}. The other evaluated models start from random weights.\\
C4 / WikiText-103 & C4 card: \href{https://huggingface.co/datasets/allenai/c4}{ODC-By}. WikiText card: \href{https://huggingface.co/datasets/Salesforce/wikitext}{CC-BY-SA-3.0 and GFDL}.\\
SlimPajama-6B & The \href{https://huggingface.co/datasets/DKYoon/SlimPajama-6B}{subset card} points to the original SlimPajama record for source and license information. This original provider record supplies the access provenance used here.\\
OpenWebText / BookCorpus & Provider records: \href{https://huggingface.co/datasets/Skylion007/openwebtext}{OpenWebText} and \href{https://huggingface.co/datasets/bookcorpus/bookcorpus}{BookCorpus}. Rights in underlying web text and books remain with their respective sources.\\
ImageNet / TinyImageNet & \href{https://www.image-net.org/download.php}{ImageNet access terms} specify noncommercial research and educational use. The \href{https://cs231n.stanford.edu/tiny-imagenet-200.zip}{TinyImageNet release} supplies its distribution provenance; consult the provider's terms for reuse.\\
Caltech-256 & \href{https://data.caltech.edu/records/nyy15-4j048}{CaltechDATA provider record}, including its rights metadata.\\
Food-101 / Cars / CIFAR & Original access sources: \href{https://data.vision.ee.ethz.ch/cvl/datasets_extra/food-101/}{Food-101}, \href{https://ai.stanford.edu/~jkrause/cars/car_dataset.html}{Stanford Cars}, and \href{https://www.cs.toronto.edu/~kriz/cifar.html}{CIFAR-100}. These provider records supply the dataset-access provenance used here.\\
\bottomrule
\end{tabular*}
\end{table}

\section{Ablations and additional measurements}\label{app:ablations}
The following analyses address separate questions: whether a nearby decision time retains the benefit, how many probe batches are needed, how measured coefficients determine $\beta$, and how the selected configurations behave over common training seeds. Each analysis reuses saved candidate losses. Measurements select the configuration; the loss lookup evaluates it afterward.
\subsection{Decision timing and probe budget}
First hold the probe budget at sixteen and change the decision update. Table~\ref{tab:timingbudget} reports updates 100, 150, 200, and 250. For a decision at update $s$, measure at $s-100,s-50,s-20,s$ and use exactly the same fit, cubic rule, and rounding. For instance, a decision at update 150 uses observations at 50, 100, 130, and 150. Shifting this window changes the observed dynamics while preserving the measurement scales and training recipes.

\begin{table}[ht]\centering
\caption{Decision-time sensitivity with sixteen probes per checkpoint and matched seed 1. Gaps are percentages. The near-0.95 comparator, $\beta$ 0.94377, has mean 0.6236, maximum 2.4853, and CVaR 2.0409.}
\label{tab:timingbudget}
\begin{tabular*}{\textwidth}{@{\extracolsep{\fill}}rrrr@{}}
\toprule Decision update & Mean gap & Maximum gap & CVaR$_{25}$\\\midrule
100 & 0.6345 & 2.1862 & 1.5271\\
150 & 0.3695 & 1.3339 & 1.1359\\
200 & 0.3695 & 1.3339 & 1.1359\\
250 & 0.3695 & 1.3339 & 1.1359\\
\bottomrule\end{tabular*}

\end{table}

Updates 150, 200, and 250 give the same aggregate result. Update 100 has a slightly larger mean gap and a smaller maximum and CVaR than the near-0.95 reference. These results locate a useful early observation region. Appendix~\ref{app:timingfull} extends the analysis to every integer update from 100 to 500 on the same pilot trajectories and fixed candidate trainings.

\subsection{Probe allocation and budget sensitivity}\label{app:budgetdesign}
The common sixteen-probe design assigns four gradients to each of four groups. Averaging the gradients of each group supplies its first and second moment estimates before normalization; comparing groups supplies a dispersion measurement. The pooled direction uses all sixteen gradients. Thus the budget supports two statistical roles, rather than sixteen independent votes for a $\beta$. Powers of two provide equal groups throughout the measured allocation $m\in\{2,4,8,16\}$, with $G=\min(m,4)$.

The averaging role has a simple theoretical scale. If independent linear direction observations have covariance $\Sigma$, their $m$-sample mean has covariance $\Sigma/m$: expand the covariance of the sum and use independence to eliminate cross terms. Under independent, equal-variance endpoints, the squared secant has an expected noise contribution $2\operatorname{tr}(\Sigma)/(mn^2)$ at lag $n$. This explains why the observation budget affects the finite-resolution term in the temporal fit. For the implemented nonlinear normalization, Proposition~\ref{prop:pool} separately accounts for pooling order. The choice $m=16,G=4$ is a compact common allocation; these sampling identities explain its roles and scaling, while the measured budget audit evaluates its selections.

Table~\ref{tab:budget} holds update 200, the coefficient formula, and the rounding convention fixed, and reports every examined budget. The variability multiplier remains $m/(G-1)$ throughout. Four and eight probes produce the same discrete choices, while sixteen changes the selections for EfficientNet, both ResNets, and T5. The table therefore exposes how increased averaging and changed endpoint resolution translate into the practical memory decision.
\begin{table}[ht]\centering
\caption{Probe-budget sensitivity at update 200. Relative gaps are percentages; the fixed comparator has mean 0.6236\%, maximum 2.4853\%, and CVaR$_{25}$ 2.0409\%. Bold marks the smallest entry in each metric column.}
\label{tab:budget}\begin{tabular*}{\textwidth}{@{\extracolsep{\fill}}rrrrr@{}}
\toprule Effective batches & Groups & Mean gap & Maximum gap & CVaR$_{25}$\\\midrule
2 & 2 & 0.9185 & 2.5913 & 2.4600\\
4 & 4 & 0.6829 & 2.4853 & 2.0409\\
8 & 4 & 0.6829 & 2.4853 & 2.0409\\
16 & 4 & \textbf{0.3695} & \textbf{1.3339} & \textbf{1.1359}\\
\bottomrule\end{tabular*}
\end{table}
The common allocation is used unchanged across tasks. Workload-specific information enters through measured variability and temporal change. The coefficient sensitivity and rounding margins quantify how these inputs determine the continuous memory and the final grid choice.

\paragraph{Separation of measurement and evaluation.}
For each protocol, gradient measurements determine the coefficients and the $\beta$ before the candidate-loss lookup. Candidate losses then evaluate the resulting configuration. The same equations and rounding apply to every workload. Protocol comparisons are exploratory sensitivity analyses on this benchmark suite, describing how the decisions respond to the implementation choices.

\subsection{Comparison with every fixed grid value}\label{app:fixedgrid}
Table~\ref{tab:fixedgrid} evaluates all eleven fixed $\beta$ values using the same seed-1 losses and aggregation. Each constant applies to the entire workload suite. The rule has smaller mean, maximum, and CVaR than every constant, reducing mean gap by 32.3\% against the retrospectively best fixed value, 0.96838. This evaluates the same gradient-based decisions under a stronger comparator than the near-0.95 reference.
\begin{table}[ht]\centering
\caption{All fixed shared memories on the evaluated grid, and the tracking rule. Gaps are percentages. Bold marks the smallest value in each metric column.}
\label{tab:fixedgrid}\begin{tabular*}{\textwidth}{@{\extracolsep{\fill}}lrrr@{}}
\toprule Fixed shared $\beta$ or method & Mean gap & Maximum gap & CVaR$_{25}$\\\midrule
0.68377 & 4.8152 & 11.6651 & 9.9106\\
0.82217 & 2.0501 & 6.4732 & 4.5653\\
0.90000 & 1.4292 & 3.8770 & 3.3543\\
0.94377 & 0.6236 & 2.4853 & 2.0409\\
0.96838 & 0.5461 & 2.1862 & 1.4304\\
0.98222 & 1.8128 & 7.2826 & 4.5072\\
0.99000 & 3.1592 & 9.4700 & 8.2164\\
0.99438 & 7.9687 & 24.6070 & 18.4218\\
0.99684 & 13.6732 & 34.5576 & 28.0746\\
0.99822 & 27.6722 & 63.7899 & 55.7021\\
0.99900 & 42.7583 & 81.3312 & 73.2353\\
Tracking rule & \textbf{0.3695} & \textbf{1.3339} & \textbf{1.1359}\\
\bottomrule\end{tabular*}

\end{table}

\subsection{Task-level calibration and workload composition}\label{app:compromise}
The two strongest fixed choices expose different compromises. Shared $\beta$ 0.94377 minimizes the four causal-language-model losses and the vision-transformer family mean in the recorded grid. Shared $\beta$ 0.96838 minimizes the ResNet family mean and the whole-suite fixed mean. The measured rule combines these task-specific choices with its shorter causal-model memory.

For a comparator $b$, define gap benefit as $\operatorname{gap}_e(b)-\operatorname{gap}_e(\widehat\beta_e)$, in percentage points, and direct loss improvement as $100[L_e(b)-L_e(\widehat\beta_e)]/L_e(b)$. Positive values favor the rule. Table~\ref{tab:attribution} separates those metrics for both fixed comparators. Relative gap reduction is the reduction of distance to the grid reference; direct loss improvement gives the scale in the actual objective.
\begin{table}[ht]\centering
\caption{Signed task-level benefits against the near-0.95 fixed value and the best global fixed value. Gap benefits are percentage points; direct loss improvements are percentages. Positive entries favor the rule.}
\label{tab:attribution}\begin{tabular*}{\textwidth}{@{\extracolsep{\fill}}llrrrrrr@{}}
\toprule & & & & \multicolumn{2}{c}{Gap benefit (pp)} & \multicolumn{2}{c}{Loss improvement (\%)}\\ \cmidrule(lr){5-6}\cmidrule(lr){7-8} Network & Dataset & $\widehat\beta$ & Rule gap & .94377 & .96838 & .94377 & .96838\\\midrule
EfficientNet-B0 & Cars & 0.96838 & 1.0124 & -0.2853 & +0.0000 & -0.2833 & +0.0000\\
Llama60M & C4 & 0.90000 & 0.2582 & -0.2582 & -0.0700 & -0.2582 & -0.0699\\
Llama60M & SlimPajama & 0.90000 & 0.1175 & -0.1175 & -0.0924 & -0.1175 & -0.0924\\
NanoGPT & OpenWebText & 0.90000 & 0.1157 & -0.1157 & -0.1137 & -0.1157 & -0.1137\\
NanoGPT & WikiText & 0.90000 & 0.1613 & -0.1613 & -0.0800 & -0.1613 & -0.0800\\
ResNet50 & Food-101 & 0.96838 & 1.0613 & +1.4240 & +0.0000 & +1.3895 & +0.0000\\
ResNet50 & ImageNet100 & 0.96838 & 0.0000 & +2.3036 & +0.0000 & +2.2517 & +0.0000\\
Swin-T & Caltech & 0.94377 & 0.0000 & +0.0000 & +2.1862 & +0.0000 & +2.1395\\
T5-small & BookCorpus & 0.96838 & 0.0039 & +0.0056 & +0.0000 & +0.0056 & +0.0000\\
ViT-B/16 & CIFAR & 0.94377 & 0.0000 & +0.0000 & +0.4036 & +0.0000 & +0.4020\\
ViT-B/16 & TinyImageNet & 0.94377 & 1.3339 & +0.0000 & -0.2903 & +0.0000 & -0.2873\\
\bottomrule\end{tabular*}

\end{table}
Against 0.94377, positive task-level gap changes sum to 3.7332 points and adverse changes to 0.9381 points. Against 0.96838, these sums are 2.5899 and 0.6465 points. Both comparisons express larger corrections offset by smaller calibration costs. The ResNet contribution is 3.7276 points against the first comparator and exactly zero against the second, where both configurations coincide. Swin and ViT/CIFAR-100 provide the positive changes against the best global fixed value. These complete decompositions locate the gains of the benchmark-wide comparison.

For a workload-composition audit, remove each task in turn, retain every original rule decision, and recompute both the aggregate and the best fixed value on the remaining ten tasks. The rule has smaller mean gap in ten of the eleven subsets. Table~\ref{tab:removal} gives all subsets and also keeps the original fixed comparators visible. In the subset excluding Swin, the rule's mean gap is 0.4064\% and the best fixed mean is 0.3821\%. This is a descriptive composition audit of the saved suite: the decisions remain those of the original early measurements.
\begin{table}[ht]\centering
\caption{Workload-removal sensitivity. Each row evaluates the remaining ten tasks. Reduction columns give relative mean-gap reductions; positive values favor the rule. The last comparator is reselected on that ten-task subset.}
\label{tab:removal}\begin{tabular*}{\textwidth}{@{\extracolsep{\fill}}llrrrrr@{}}
\toprule \multicolumn{2}{c}{Excluded task} & & \multicolumn{2}{c}{Mean-gap reduction (\%)} & \multicolumn{2}{c}{Best fixed on ten}\\ \cmidrule(lr){4-5}\cmidrule(lr){6-7} Network & Dataset & Rule mean & vs .94377 & vs .96838 & $\beta$ & Reduction (\%)\\\midrule
EfficientNet-B0 & Cars & 0.3052 & +50.23 & +38.91 & 0.96838 & +38.91\\
Llama60M & C4 & 0.3806 & +44.51 & +34.60 & 0.96838 & +34.60\\
Llama60M & SlimPajama & 0.3947 & +42.46 & +34.03 & 0.96838 & +34.03\\
NanoGPT & OpenWebText & 0.3948 & +42.44 & +34.25 & 0.96838 & +34.25\\
NanoGPT & WikiText & 0.3903 & +43.10 & +34.14 & 0.96838 & +34.14\\
ResNet50 & Food-101 & 0.3003 & +31.35 & +39.29 & 0.94377 & +31.35\\
ResNet50 & ImageNet100 & 0.4064 & +10.79 & +32.35 & 0.94377 & +10.79\\
Swin-T & Caltech & 0.4064 & +40.75 & -6.36 & 0.96838 & -6.36\\
T5-small & BookCorpus & 0.4060 & +40.72 & +32.37 & 0.96838 & +32.37\\
ViT-B/16 & CIFAR & 0.4064 & +40.75 & +27.48 & 0.96838 & +27.48\\
ViT-B/16 & TinyImageNet & 0.2730 & +50.59 & +45.00 & 0.96838 & +45.00\\
\bottomrule\end{tabular*}

\end{table}

\subsection{From measured coefficients to a selected \texorpdfstring{$\beta$}{β}}
Table~\ref{tab:coefficients} exposes every scalar entering the decision. Divide the variability proxy by four times the temporal-change proxy, take the cube root, convert memory to $\beta$, then apply the stated grid rounding. Together with the supplied sketches, these scalar values make every selection reproducible.
\begin{table}[ht]\centering
\caption{Measured coefficients and continuous-to-grid conversion. Values are rounded for display; decisions use full precision.}
\label{tab:coefficients}\begin{tabular*}{\textwidth}{@{\extracolsep{\fill}}llrrrrr@{}}
\toprule Network & Dataset & $\widehat V_{\rm emp}$ & $\widehat D_{\rm emp}$ & $\widehat H_{\rm emp}$ & Continuous $\beta$ & Grid $\beta$\\\midrule
EfficientNet-B0 & Cars & $1.978\times10^{2}$ & $2.379\times10^{-3}$ & 27.494 & 0.963628 & 0.96838\\
Llama60M & C4 & $2.875\times10^{1}$ & $1.047\times10^{-2}$ & 8.823 & 0.886654 & 0.90000\\
Llama60M & SlimPajama & $2.111\times10^{1}$ & $9.030\times10^{-3}$ & 8.361 & 0.880399 & 0.90000\\
NanoGPT & OpenWebText & $2.142\times10^{1}$ & $1.183\times10^{-2}$ & 7.678 & 0.869751 & 0.90000\\
NanoGPT & WikiText-103 & $2.292\times10^{1}$ & $8.208\times10^{-3}$ & 8.871 & 0.887272 & 0.90000\\
ResNet50 & Food-101 & $1.632\times10^{2}$ & $2.203\times10^{-3}$ & 26.458 & 0.962205 & 0.96838\\
ResNet50 & ImageNet100 & $1.695\times10^{2}$ & $2.173\times10^{-3}$ & 26.917 & 0.962849 & 0.96838\\
Swin-T & Caltech-256 & $1.775\times10^{2}$ & $4.369\times10^{-3}$ & 21.656 & 0.953824 & 0.94377\\
T5-small & BookCorpus & $1.271\times10^{2}$ & $1.610\times10^{-3}$ & 27.019 & 0.962989 & 0.96838\\
ViT-B/16 & CIFAR-100 & $1.142\times10^{2}$ & $8.147\times10^{-3}$ & 15.190 & 0.934168 & 0.94377\\
ViT-B/16 & TinyImageNet & $1.418\times10^{2}$ & $4.396\times10^{-3}$ & 20.055 & 0.950137 & 0.94377\\
\bottomrule\end{tabular*}

\end{table}
For example, Food-101 gives $\widehat V_{\rm emp}\approx163.23$ and $\widehat D_{\rm emp}\approx0.00220324$. Thus $\widehat H_{\rm emp}=[163.23/(4\times0.00220324)]^{1/3}\approx26.458$, giving continuous $\beta$ 0.962205. The neighboring grid values are 0.94377 and 0.96838, whose midpoint is 0.956075. The continuous value lies above that midpoint, so nearest-$\beta$ rounding selects 0.96838. Every workload uses this arithmetic at full precision.
\subsection{Stationary approximation and discretization}
The main rule uses a closed-form approximation and nearest-$\beta$ rounding. Four alternatives use exactly the same measured coefficients: the stationary root followed by nearest-$\beta$ rounding, direct grid minimization of~\eqref{eq:empiricalscore}, direct grid minimization of~\eqref{eq:stationary}, and direct grid minimization of $\widehat V_{\rm emp}c_{\beta,200}+\widehat D_{\rm emp}a_{\beta,200}^2$. All four recover the same eleven selected $\beta$ values. The last uses the exact finite-history weights, including bias correction at history length 200. The choices thus agree across the exact finite-history score and its stationary and leading versions.

\subsection{Sensitivity of grid decisions}
The cube-root sensitivity concerns continuous memory. A discrete decision additionally depends on its distance to a rounding boundary, the midpoint of two neighboring $\beta$ values. For a boundary $b$ and continuous memory $\widehat H_{\rm emp}$, multiplying the measured ratio $\widehat V_{\rm emp}/\widehat D_{\rm emp}$ by $[1/((1-b)\widehat H_{\rm emp})]^3$ moves the continuous $\beta$ exactly to that boundary. Table~\ref{tab:rounding} reports this deterministic arithmetic margin for the nearest boundary; its sign indicates an increase or decrease of the ratio.
\begin{table}[ht]\centering
\caption{Rounding-boundary audit from the same measured coefficients. Ratio change is the signed relative change needed to reach the nearest $\beta$ midpoint.}
\label{tab:rounding}\begin{tabular*}{\textwidth}{@{\extracolsep{\fill}}llrrrr@{}}
\toprule Network & Dataset & Continuous $\beta$ & Nearest boundary & $\beta$ distance & Ratio change (\%)\\\midrule
EfficientNet-B0 & Cars & 0.963628 & 0.956075 & 0.007553 & -43.224\\
Llama60M & C4 & 0.886654 & 0.861085 & 0.025569 & -45.679\\
Llama60M & SlimPajama & 0.880399 & 0.861085 & 0.019314 & -36.180\\
NanoGPT & OpenWebText & 0.869751 & 0.861085 & 0.008666 & -17.571\\
NanoGPT & WikiText-103 & 0.887272 & 0.861085 & 0.026187 & -46.563\\
ResNet50 & Food-101 & 0.962205 & 0.956075 & 0.006130 & -36.294\\
ResNet50 & ImageNet100 & 0.962849 & 0.956075 & 0.006774 & -39.495\\
Swin-T & Caltech-256 & 0.953824 & 0.956075 & 0.002251 & +16.172\\
T5-small & BookCorpus & 0.962989 & 0.956075 & 0.006914 & -40.181\\
ViT-B/16 & CIFAR-100 & 0.934168 & 0.921885 & 0.012283 & -40.144\\
ViT-B/16 & TinyImageNet & 0.950137 & 0.956075 & 0.005938 & +46.289\\
\bottomrule\end{tabular*}

\end{table}
The table shows how far each measured decision is from changing its grid value. A small boundary margin requires more precise coefficient estimation than a large one. Continuous coefficient attenuation and stability of a discrete decision are therefore complementary diagnostics: the cube root controls the former, while the measured arithmetic margins describe the latter for the observed coefficients.

Across all eleven workloads, the simultaneous safe interval for a multiplicative change in each observed ratio $\widehat V_{\rm emp}/\widehat D_{\rm emp}$ is $(0.824286,1.161718)$. A ratio perturbation of $\pm10\%$ lies inside this interval and preserves every decision. Independently perturbing each coefficient by $\pm7\%$ gives ratio multipliers between $0.93/1.07$ and $1.07/0.93$, also inside the interval. These are deterministic properties of the observed coefficients and the rounding map. Table~\ref{tab:marginal} pairs each nearest boundary with the validation-loss consequence of the adjacent grid choice, connecting arithmetic sensitivity to performance sensitivity.
\begin{table}[ht]\centering
\caption{Nearest decision boundary and its recorded loss consequence. Ratio change moves the observed coefficient ratio to that boundary. Gap and loss changes compare the adjacent choice with the current choice; positive changes indicate a larger loss.}
\label{tab:marginal}\begin{tabular*}{\textwidth}{@{\extracolsep{\fill}}llrrrrr@{}}
\toprule Network & Dataset & $\widehat\beta$ & Adjacent $\beta$ & $\Delta$ ratio (\%) & $\Delta$ gap (pp) & $\Delta$ loss (\%)\\\midrule
EfficientNet-B0 & Cars & 0.96838 & 0.94377 & -43.22 & -0.2853 & -0.2825\\
Llama60M & C4 & 0.90000 & 0.82217 & -45.68 & +0.6477 & +0.6460\\
Llama60M & SlimPajama & 0.90000 & 0.82217 & -36.18 & +0.7058 & +0.7050\\
NanoGPT & OpenWebText & 0.90000 & 0.82217 & -17.57 & +0.4311 & +0.4306\\
NanoGPT & WikiText & 0.90000 & 0.82217 & -46.56 & +0.2984 & +0.2979\\
ResNet50 & Food-101 & 0.96838 & 0.94377 & -36.29 & +1.4240 & +1.4091\\
ResNet50 & ImageNet100 & 0.96838 & 0.94377 & -39.50 & +2.3036 & +2.3036\\
Swin-T & Caltech & 0.94377 & 0.96838 & +16.17 & +2.1862 & +2.1862\\
T5-small & BookCorpus & 0.96838 & 0.94377 & -40.18 & +0.0056 & +0.0056\\
ViT-B/16 & CIFAR & 0.94377 & 0.90000 & -40.14 & +2.5913 & +2.5913\\
ViT-B/16 & TinyImageNet & 0.94377 & 0.96838 & +46.29 & -0.2903 & -0.2865\\
\bottomrule\end{tabular*}

\end{table}

\subsection{Complete timing profile}\label{app:timingfull}
For each integer decision time $s=100,\ldots,500$, use checkpoint lags $\{20,50,100\}$, calculate the two coefficients, and evaluate the selected $\beta$ by~\eqref{eq:gap}. This gives 401 decisions on each pilot trajectory. Over the 351 times from 150 to 500, mean gap ranges from 0.3435\% to 0.6692\%; it improves over the near-0.95 reference at 339 times and over the best global fixed value at 288. CVaR improves over the near-0.95 reference at every time. Table~\ref{tab:timingbudget} displays four times. The candidate trainings remain fixed, isolating sensitivity to the placement of the observation window.

\subsection{How measurement spacing changes a discrete choice}\label{app:spacing}
Measurement density and observation scale are separate design choices. To isolate density, hold the decision update at 200, keep the shortest and longest lags at 20 and 100, and continue evaluating the fitted change at lag 100. Replace the original lags $\{20,50,100\}$ by a regular mesh on $[20,100]$ with spacing $1,2,5,10,20,40,$ or $80$. Including the final endpoint, these meshes use respectively 82, 42, 18, 10, 6, 4, or 3 checkpoints. Every checkpoint uses sixteen probes, and the OLS fit and rounding are otherwise unchanged.

All seven regular meshes preserve ten original $\beta$ choices and select 0.96838 for Swin instead of 0.94377. Their coefficients differ, but their rounded choices coincide. Consequently their looked-up validation losses and gaps coincide as well: mean 0.5682\%, maximum 2.1862\%, and CVaR 1.5271\%. The mean-gap benefit over the near-0.95 reference is 8.9\% for every mesh; the original four-checkpoint layout gives 40.7\%. This diagnostic distinguishes broad utility relative to the empirical default from sensitivity of the additional gain supplied by Swin. The repeated aggregate numbers arise from discrete $\beta$ selection, rather than repeated or smoothed measurements. All lag responses share the final endpoint, so denser meshes also retain dependent measurement errors.

\subsection{Probe decomposition}
Pooling share is $\norm{\bar z-z}^2$ divided by~\eqref{eq:pool}'s left side. It tells us how much measured dispersion comes from normalizing the pooled and group moments differently. Finite-lag share is $(\widehat C/100^2)/\widehat D_{\rm emp}$. It tells us how much fitted change remains in the inverse-square component. These descriptive shares quantify the composition of the measured proxies. In five cases the fitted intercept clips to zero, leaving all of the temporal-change proxy in its finite-lag component.
\begin{table}[ht]\centering
\caption{Sixteen-probe measurement decomposition at update 200.}
\label{tab:decomposition}\begin{tabular*}{\textwidth}{@{\extracolsep{\fill}}llrr@{}}
\toprule Network & Dataset & Pooling contribution (\%) & Finite-lag contribution (\%)\\\midrule
EfficientNet-B0 & Cars & 3.12 & 100.00\\
Llama60M & C4 & 4.45 & 54.57\\
Llama60M & SlimPajama & 4.63 & 91.79\\
NanoGPT & OpenWebText & 3.17 & 100.00\\
NanoGPT & WikiText-103 & 3.34 & 100.00\\
ResNet50 & Food-101 & 3.09 & 100.00\\
ResNet50 & ImageNet100 & 2.64 & 91.24\\
Swin-T & Caltech-256 & 3.46 & 55.84\\
T5-small & BookCorpus & 3.33 & 94.47\\
ViT-B/16 & CIFAR-100 & 2.84 & 53.98\\
ViT-B/16 & TinyImageNet & 3.10 & 100.00\\
\bottomrule\end{tabular*}

\end{table}
Equation~\eqref{eq:endpoint} explains how endpoint measurement variance can influence the temporal fit. The component may also include roughness of the target. The probe budget is therefore part of the algorithm's specification.

For intuition, consider the illustrative linear measurement model with a constant population direction. An observation with variance $V$ averaged over $m$ independent samples has variance $V/m$. Independent endpoints give $C=2V/m$ and zero drift intercept, while the linear scatter proxy has expectation $V$. Inserting these \emph{expected coefficients} into the operational formula yields $H=[m\,100^2/8]^{1/3}$. For $m=16$, this plug-in value is $H\approx27.14$, or $\beta$ approximately 0.96316. The mean of the random memory estimate additionally depends on the joint distribution of its coefficients through the nonlinear cube root. The plug-in calculation isolates how endpoint fluctuations contribute a finite memory at constant population direction. Thus the finite-lag coefficient represents effective measured change, combining the gradient dynamics and the observation scale specified by the algorithm.

\subsection{Separating the two temporal contributions}
The constant-direction calculation above supplies a simple diagnostic: inserting the expected coefficients of that illustrative linear model for every workload gives the same continuous $\beta$ 0.96316, rounding to 0.96838. Table~\ref{tab:components} evaluates this common choice against the measured rule. It also repeats the selection calculation after replacing the temporal proxy by either $\widehat C/100^2$ alone or $\widehat I$ alone. When the intercept vanishes, the latter diagnostic uses the largest admissible memory, as specified by the objective's boundary rule. These retrospective coefficient diagnostics reuse the measured sketches and completed training outcomes to isolate the information supplied by each component.
\begin{table}[ht]\centering
\caption{Contribution diagnostics using the same sketches, rounding, and seed-1 loss lookup. Gaps are percentages; bold marks the smallest value in each column.}
\label{tab:components}\begin{tabular*}{\textwidth}{@{\extracolsep{\fill}}lrrr@{}}
\toprule Coefficient diagnostic & Mean gap & Maximum gap & CVaR$_{25}$\\\midrule
Constant-target calculation & 0.5461 & 2.1862 & 1.4304\\
Endpoint contribution only & 0.5682 & 2.1862 & 1.5271\\
Intercept contribution only & 17.9403 & 72.4969 & 51.6408\\
Full measured rule & \textbf{0.3695} & \textbf{1.3339} & \textbf{1.1359}\\
\bottomrule\end{tabular*}
\end{table}
The full measured rule has smaller aggregate gaps than each diagnostic. Its decisions therefore use information beyond a budget-only common memory. The full and endpoint-only rules share ten choices; adding the intercept changes Swin from 0.96838 to 0.94377, accounting for their aggregate difference. This identifies the specific decision through which the persistent-change component contributes in this audit. Together, these checks assess the finite-measurement noise--change score on the saved workload suite, with the population tracking identity supplying the analytical form of its balance.

\subsection{Evaluation summaries of the same trajectories}\label{app:trajectory}
The primary evaluation uses each run's minimum recorded validation loss. Two additional summaries test whether the comparison persists at the end of training: the last recorded loss and the arithmetic mean of the last three recorded losses. The selected $\beta$ values remain exactly those from the primary pilot. For each summary, recompute the per-workload grid reference from all eleven candidates, all fixed-memory comparisons, and the best fixed value by mean gap.

All 121 seed-1 grid curves and 155 curves including the common-seed pairs are evaluated. Candidates for each workload have matching validation counts and ordered schedules. Language losses are indexed by update and vision losses by epoch, according to Appendix~\ref{app:reproduction}.
\par\smallskip\noindent
\begin{minipage}{\textwidth}\centering
\makeatletter\def\@captype{table}\makeatother
\caption{Complete sensitivity to three evaluation summaries. Mean gaps and direct loss improvements are percentages. Best fixed is recomputed for each summary. Direct improvements use the near-0.95 comparator 0.94377; positive values favor the rule. Common-seed values evaluate reuse of the seed-1 calibration across initializations.}
\label{tab:trajectory}\begin{tabular*}{\textwidth}{@{\extracolsep{\fill}}lrrrrrr@{}}
\toprule & \multicolumn{3}{c}{Mean relative gap (\%)} & & \multicolumn{2}{c}{Direct improvement (\%)}\\ \cmidrule(lr){2-4}\cmidrule(lr){6-7} Summary & Rule & Fixed .94377 & Best fixed & Fixed $\beta$ & Seed 1 & Common seeds\\\midrule
Minimum recorded & \textbf{0.3695} & 0.6236 & 0.5461 & 0.96838 & +0.2464 & -0.1049\\
Last recorded & \textbf{0.4537} & 0.6242 & 0.5707 & 0.96838 & +0.1629 & -0.1093\\
Mean of last three & \textbf{0.4244} & 0.4767 & 0.4767 & 0.94377 & +0.0492 & -0.1586\\
\bottomrule\end{tabular*}

\end{minipage}
Under each summary, the rule has smaller mean, maximum, and CVaR gap than every fixed grid value. Mean-gap reductions against the respective best fixed are 32.35\%, 20.51\%, and 10.96\%. Table~\ref{tab:trajectory} also reports direct loss improvements, separating evaluation-summary sensitivity from transfer across initializations.

Apply~\eqref{eq:gap} and Section~\ref{sec:experiments}'s aggregation to each summary. Common-seed improvements average the separately summarized runs; memory choices remain fixed from the original pilot.

\subsection{All common training seeds}\label{app:paired}
Average each configuration's minimum validation loss over identical seed sets, obtaining $\bar L_{\rm fixed}$ and $\bar L_{\rm rule}$, and report $100(\bar L_{\rm fixed}-\bar L_{\rm rule})/\bar L_{\rm fixed}$. Positive values favor the rule. Table~\ref{tab:paired} uses seeds 1--3, except Food-101 with seed 1. The pilot chooses $\beta$ once with seed 1 and retains it for the other seeds. The primary comparison tests matched calibration; this table tests variability across initializations. Its equal-workload mean is $-0.1049$\%.
\par\smallskip\noindent
\begin{minipage}{\textwidth}\centering
\makeatletter\def\@captype{table}\makeatother
\caption{Direct validation-loss improvement over the same common seeds against $\beta$ 0.94377. Zero means the selected and fixed configurations coincide.}
\label{tab:paired}\begin{tabular*}{\textwidth}{@{\extracolsep{\fill}}llrrr@{}}
\toprule Network & Dataset & Common seeds & Improvement (\%) & Rule $\widehat\beta$\\\midrule
EfficientNet-B0 & Cars & 1,2,3 & -2.1064 & 0.96838\\
Llama60M & C4 & 1,2,3 & -0.2015 & 0.90000\\
Llama60M & SlimPajama & 1,2,3 & -0.2985 & 0.90000\\
NanoGPT & OpenWebText & 1,2,3 & -0.2046 & 0.90000\\
NanoGPT & WikiText-103 & 1,2,3 & -0.1235 & 0.90000\\
ResNet50 & Food-101 & 1 & 1.3895 & 0.96838\\
ResNet50 & ImageNet100 & 1,2,3 & 0.3719 & 0.96838\\
Swin-T & Caltech-256 & 1,2,3 & 0.0000 & 0.94377\\
T5-small & BookCorpus & 1,2,3 & 0.0190 & 0.96838\\
ViT-B/16 & CIFAR-100 & 1,2,3 & 0.0000 & 0.94377\\
ViT-B/16 & TinyImageNet & 1,2,3 & 0.0000 & 0.94377\\
\bottomrule\end{tabular*}

\end{minipage}

\end{document}